%% file: neurips_2026.tex
\documentclass{article}

\PassOptionsToPackage{numbers, sort, compress}{natbib}
\usepackage[preprint]{neurips_2026}

\usepackage[utf8]{inputenc} 
\usepackage[T1]{fontenc}    
\usepackage{hyperref}       
\usepackage{url}            
\usepackage{booktabs}       
\usepackage{amsfonts}       
\usepackage{nicefrac}       
\usepackage{microtype}      
\usepackage{xcolor}         

\usepackage{amsmath}    
\usepackage{amssymb}    
\usepackage{bm}         
\usepackage{graphicx}   
\usepackage{amsthm}
\usepackage{amsfonts}
\usepackage{algorithm}
\usepackage{algorithmic}
\usepackage{natbib}
\usepackage{tikz}
\usetikzlibrary{arrows.meta,positioning,calc}

\usepackage{enumitem}
\usepackage{adjustbox}
\usepackage{placeins}

\usepackage[capitalize,noabbrev]{cleveref}
\usepackage{subcaption}

\theoremstyle{plain}
\newtheorem{theorem}{Theorem}[section]

\newtheorem{lemma}[theorem]{Lemma}

\theoremstyle{definition}
\newtheorem{definition}[theorem]{Definition}

\theoremstyle{remark}

\definecolor{eccpurple}{rgb}{255, 0, 255}

\newcommand\ecc[1]{}
\newcommand\ma[1]{}
\newcommand\az[1]{}

\newcommand{\LLM}{LLM}
\newcommand{\LLMs}{LLMs}

\title{Open Problems in Constitutional Preference Reconstruction}

\makeatletter
\let\orig@bottomtitlebar\@bottomtitlebar
\renewcommand\@bottomtitlebar{%
  \orig@bottomtitlebar
  \vspace{-0.5em}%
}
\renewcommand\@notice{}%
\makeatother

\newcommand*\samethanks[1][\value{footnote}]{\footnotemark[#1]}

\author{%
	Eleanor Clifford\thanks{Co-first authors. Correspondence to
\href{mailto:eleanor.clifford@cl.cam.ac.uk}{eleanor.clifford@cl.cam.ac.uk} and
\href{mailto:ma2151@cst.cam.ac.uk}{ma2151@cst.cam.ac.uk}.}\\
	Imperial College London\\
	University of Cambridge
	\And
	Michael Amir\samethanks[1]\\
	University of Cambridge
	\And
	Arduin Findeis\\
	University of Cambridge
	\And
	Aaron Zhao\\
	Imperial College London
	\And
	Robert Mullins\\
	University of Cambridge
}

\begin{document}


\maketitle
\vspace*{-0.5em}

\input{sections/abstract.tex}
\input{sections/introduction.tex}
\input{sections/related.tex}
\input{sections/motivation.tex}
\input{sections/method.tex}
\input{sections/results.tex}
\input{sections/discussion.tex}

\FloatBarrier
\input{sections/conclusion.tex}
\input{sections/acknowledgements.tex}
\FloatBarrier

\bibliography{bibliography}
\bibliographystyle{unsrtnat}


\newpage
\appendix

\input{appendices/code.tex}
\input{appendices/hyperparameters.tex}

\newpage
\input{appendices/principle_refinement_ablation.tex}
\input{appendices/constitution_size_ablation.tex}
\input{appendices/lemma_proof.tex}

\newpage
\input{appendices/compute.tex}
\input{appendices/example_constitutions_across_methods.tex}

\newpage
\input{appendices/greedy_vs_naive.tex}

\newpage
\input{appendices/example_constitutions_across_llms.tex}

\newpage
\input{appendices/cross_llm_agreement.tex}

\newpage
\input{appendices/full_results.tex}

\newpage
\input{appendices/prompt_templates.tex}


\end{document}

%% file: sections/abstract.tex
\begin{abstract}

Pairwise preference data is widely used for training and evaluating language models (e.g., RLHF), but each datapoint records a \emph{choice}, not the rationale behind it. Methods such as Inverse Constitutional AI (ICAI) attempt to improve interpretability by compressing datasets into short ``constitutions'' of natural-language principles. We argue this framing is under-specified: a flat list of principles is not yet an executable decision rule because it leaves principle composition implicit. We use the pairwise setting as a testbed to empirically characterize three open problems in constitutional methods. First, principle quality is hard to measure: coverage and accuracy are useful but incomplete proxies for end-to-end reconstruction. Second, \emph{composition is ambiguous}: holding principles fixed, different executors (LLM judge versus majority vote) agree only $73\%$ of the time. Third, \emph{constitutions differ between LLMs}: cross-model vote agreement is $73\%$, whereas intra-model agreement is $81\%$. Across PRISM, AlpacaEval, and Chatbot Arena, we show that principle refinement (ICAI+) may be a first step towards ameliorating these problems: inter-executor agreement rises to $78\%$, and transparent executors match LLM judge accuracy ($66\%$ vs.\ $67\%$). Our results highlight that constitutions should be evaluated as \emph{constitution--executor systems}, with implications for LLMs-as-a-judge broadly.

\end{abstract}

%% file: sections/introduction.tex
\section{Introduction}

Pairwise preference data, consisting of judgments that one option is better than another, is an important interface between humans and modern machine learning systems. This type of data is used in methods such as reward modeling and reinforcement learning from human feedback (RLHF)~\citep{christiano2017deep, ouyang2022training} and direct preference optimization (DPO)~\citep{rafailov2023direct}, and forms the basis of notable evaluation benchmarks for large language models (\LLMs), such as Chatbot Arena \cite{chiang2024chatbotarenaopenplatform} and AlpacaEval \cite{alpaca_eval}. Yet preference data is difficult to audit, because it records \emph{decisions}, not the \emph{rationales} behind them: the reasons for a choice are typically implicit, contingent on context, and driven by interacting (sometimes conflicting)  considerations rather than a single stable rule~\citep{tversky1991loss,casper2023open}. This opacity is undesirable in any setting where models are trained or evaluated based on preference data, but it is especially problematic when preference data is treated as a proxy for human values.

A natural response is to \emph{interpret} preference datasets by translating them into something legible. Inspired by Constitutional AI~\citep{bai2022constitutional}, recent work proposes to compress a preference dataset into a ``constitution'': a list $\mathcal{C}=\{P_1,\dots,P_d\}$ of natural-language principles intended to explain and reconstruct the dataset~\citep{findeis2025inverse}. A constitution is attractive because it turns a large set of opaque choices into something legible. But this artifact is easy to overread. A list of principles may name some of the criteria behind the data, but it still does not say how those criteria should be composed into decisions.

This paper argues that the ``constitution'' framing is incomplete for a structural reason:
\textbf{a constitution is not yet an executable decision rule}.
A flat list of principles does not specify how conflicts are resolved, how context affects which principles apply, or how to handle underspecified trade-offs (e.g., correctness vs.\ brevity, harmlessness vs.\ helpfulness).
In practice, such missing details often dominate LLM-as-a-judge results: annotations can vary substantially with the judge model and prompting even when the written criteria are held fixed~\citep{zheng2023judgingllmasajudgemtbenchchatbot,liu2023g_eval,chiang2024chatbotarenaopenplatform}.

\input{appendices/the_pipeline.tex}

To make the missing composition step explicit, we separate constitutional preference reconstruction into three roles: a \emph{discoverer}, an \emph{annotator}, and an \emph{executor}. \Cref{fig:pipeline} illustrates this setup. Given labeled preference pairs $\mathcal{D}=\{(A_i,B_i,y_i)\}_{i=1}^N$, where $y_i\in\{\text{A},\text{B}\}$ is the preferred response, a discoverer $M_D$ proposes candidate principles $\widetilde{\mathcal{P}}=\{P_j\}_{j=1}^m$. An annotator $M_A$ then applies each principle to each pair, producing votes $V_{ij}\in\{\text{A},\text{B},\text{N/A}\}$, where N/A means that the principle is not applicable (including tie scenarios). A subset of these principles forms a constitution $\mathcal{C}\subset\widetilde{\mathcal{P}}$, and an executor $M_E$ turns the constitution into final reconstructed labels $\hat y_i$ on held-out comparisons.

The executor is the step that is often hidden. In ICAI-style pipelines, it is usually an \LLM{} prompted with the constitution. That \LLM{} does not transparently ``apply'' the constitution; it decides which principles matter, how conflicts should be resolved, and what to do when the principles are vague or incomplete. Thus, performance ``under a constitution'' is really performance of a \emph{constitution--executor system}, not the constitution alone. The same issue appears in criteria-based \LLM{} evaluation more broadly, where the effective rule depends on both the written rubric and the judge model~\citep{chiang2024chatbotarenaopenplatform,MTBench_Bai_2024,liu2023g_eval,kim2024prometheus,zheng2023judgingllmasajudgemtbenchchatbot}.

This framework raises \textbf{three practical open problems} which we empirically characterize:\vspace{-0.4em}

\begin{enumerate}
    \item \textbf{Principle quality is hard to measure.}
    Per-principle metrics such as coverage and accuracy are weak proxies for end-to-end reconstruction.
    Different discovery procedures can yield constitutions that reconstruct equally well while differing substantially in semantic content.

    \item \textbf{Principle composition is ambiguous.}
    Holding a constitution fixed, different plausible executors (LLM-as-a-judge prompting versus transparent aggregation rules such as majority, priority, or tree-based execution) can disagree materially.
    This executor sensitivity quantifies the extent to which the written constitution under-specifies the effective decision rule.

    \item \textbf{Constitutions are not reliably transferable between \LLMs.}
    Different \LLMs{} can discover different principles from the same data, and can also interpret the same principles differently when used as annotators.
    This limits explainability: if different LLMs disagree on the meaning of a principle, the principle text itself cannot fully describe the decision process.\vspace{-0.2em}
\end{enumerate}

\paragraph{Approach and findings}
Across PRISM~\citep{kirk2024prism}, AlpacaEval~\citep{alpaca_eval}, and Chatbot Arena~\citep{chiang2024chatbotarenaopenplatform}, we quantify executor sensitivity via inter-executor agreement and measure cross-\LLM{} portability via agreement on principle applicability and vote direction. We show that naive ICAI discovery often produces principles that are either narrow (near-zero coverage) or broad but semantically underspecified, effectively deferring key decisions to the executor.
As a constructive step, we evaluate \textbf{ICAI+}, a lightweight refinement procedure that (i) filters out low-quality candidates on a small subsample and (ii) rewrites marginal principles using targeted counterexamples.
Empirically, ICAI+ improves coverage and reduces some signs of underspecification, narrowing the gap between opaque LLM execution and transparent executors: with refined principles, simple aggregation rules and small tree-based executors perform comparably to LLM-as-a-judge. 

\paragraph{Contributions}

We make four contributions.
\begin{enumerate}
	\item \textbf{Problem formalisation}. We formalize constitutional preference reconstruction as a \emph{discoverer--annotator--executor} stack, and argue that constitutions should be evaluated as constitution--executor systems.

	\item \textbf{We show that executor choice matters.} Under naive ICAI principles, LLM execution agrees with majority vote only $73.4\% \pm 4.2$ of the time on test, while majority vote underperforms LLM execution ($60.7\% \pm 1.9$ vs.\ $64.5\% \pm 2.0$; \cref{tab:agreement_pair,tab:reconstruction}).

	\item \textbf{We show that principles have limited cross-LLM transferability.} Vote agreement for naive principles is $73.0\% \pm 2.1$, well below an $80.5\% \pm 3.2$ intra-model baseline. (\cref{tab:cross_llm_agreement_icai_pair}).

	\item \textbf{We introduce ICAI+ as a refinement case study.} It raises LLM--majority agreement from $73.4\% \pm 4.2$ to $78.4\% \pm 4.1$, and makes majority vote competitive with LLM-as-a-judge execution ($64.6\% \pm 2.6$ vs.\ $65.5\% \pm 2.1$; \cref{tab:agreement_pair,tab:reconstruction}). These gains reduce, but do not eliminate, the open problems above.
\end{enumerate}

Rather than solving constitutional preference reconstruction, the focus of this paper is to reveal several unaddressed challenges associated with it: principle quality is hard to evaluate, composition is under-specified, and natural-language principles are model-dependent. ICAI+ is useful because it shows that improving principles can reduce some of these problems, but all three remain open.

%% file: appendices/the_pipeline.tex

\begin{figure}[t]
\centering

\definecolor{fyellow}{HTML}{af8725}
\definecolor{flyellow}{HTML}{fef0c9}
\definecolor{fgreen2}{HTML}{2e765f}
\definecolor{flgreen2}{HTML}{dff2e9}
\definecolor{grey}{HTML}{eeeeee}
\newcommand{\localfontstyle}{\scriptsize\sffamily}

\begin{adjustbox}{max width=\linewidth}
\begin{tikzpicture}[
  proc/.style={
    draw=fgreen2,
    line width=0.8pt,
    text=black,
    rounded corners,
    fill=flgreen2,
    align=center,
    inner sep=3.5pt,
    font=\localfontstyle,
    text width=0.2\columnwidth,
    minimum height=2\baselineskip
  },
  art/.style={
    draw=fyellow,
    line width=1pt,
    text=black,
    fill=flyellow,
    align=center,
    inner sep=3.5pt,
    font=\localfontstyle,
    text width=0.2\columnwidth,
    minimum height=2\baselineskip
  },
  halfart/.style={art, text width=0.15\columnwidth},
  arrow/.style={-{Latex[length=2mm]}, thick},
  note/.style={font=\localfontstyle, align=center},
  node distance=3mm
]


\node[proc, draw=black, fill=grey] (start) {Start};
\node[art, below=of start] (data) {Labeled preference pairs $\mathcal{D}=\{(A_i,B_i,y_i)\}$};
\node[proc, below=of data] (discover) {Discover principles \emph{(discoverer $M_D$)}};
\node[art, right=of discover] (pool) {Candidate principles $\mathcal{P}$};

\node[proc, above=of pool] (annot) {Apply principles on training pairs \emph{(annotator $M_A$)}};
\node[art, above=of annot] (votes) {Training vote matrix $V_{ij}\in\{\text{A},\text{B},\text{N/A}\}$};

\node[art, right=of annot, text width=0.18\columnwidth] (E) {Executor specification\\ \emph{LLM / MV / PV / tree}};
\node[art, right=of E, text width=0.1\columnwidth] (C) {Constitution\\$\mathcal{C}\subset\mathcal{P}$};

\node[proc, anchor=south, yshift=3mm] (train)  at ($(E.north)!0.4!(C.north)$) {Select/train executor and matching constitution};
\node[proc, anchor=north, yshift=-3mm, text width=0.18\columnwidth] (apply) at ($(E.south)!0.4!(C.south)$) {Execute on test pairs $\mathcal{D}_{\text{test}}$ $\rightarrow$ predictions};

\node[art, right=of apply] (pred) {Reconstructed labels $\hat y_i$};

\draw[arrow] (start) -- (data);
\draw[arrow] (data) -- (discover);
\draw[arrow] (discover) -- (pool);
\draw[arrow] (pool) -- (annot);
\draw[arrow] (annot) -- (votes);
\draw[arrow] (votes) -- (train);
\draw[arrow] (train) -- (C);
\draw[arrow] (train) -- (E);
\draw[arrow] (C) -- (apply);
\draw[arrow] (E) -- (apply);
\draw[arrow] (apply) -- (pred);

\definecolor{flgray}{HTML}{f0f0f0}
\node[draw=black!25, line width=1pt, rounded corners, fill=flgray,
      inner sep=4pt, font=\localfontstyle, align=left, anchor=north east,
      xshift=-2mm, yshift=-2mm
     ] at (train.north-|pred.east) {
        \textcolor{black!60}{\textbf{Legend}}\\[3pt]
        \tikz[baseline=0.1ex]\draw[draw=fyellow,line width=1pt,fill=flyellow,rounded corners=0pt]
            (0,0) rectangle (3.2mm,2.2mm);~Artifact\\[3pt]
            \tikz[baseline=0.1ex]\draw[draw=fgreen2,line width=0.8pt,fill=flgreen2,rounded corners=2pt]
            (0,0) rectangle (3.2mm,2.2mm);~Process
     };

\end{tikzpicture}
\end{adjustbox}
\caption{\textbf{Preference reconstruction as a discoverer--annotator--executor stack.} Each datapoint is $(A_i,B_i,y_i)$, where $A_i$ and $B_i$ are the two candidate responses and $y_i\in\{\text{A},\text{B}\}$ denotes the preferred response. The annotator produces votes $V_{ij}\in\{\text{A},\text{B},\text{N/A}\}$ for principle $P_j$ on pair $i$ (N/A = not applicable). Constitution selection and executor fitting occur using training votes and may depend on the executor class (e.g., majority-vote-optimized selection for flat constitutions).}
\vspace{-0.8em}
\label{fig:pipeline}
\end{figure}

%% file: sections/related.tex
\section{Background}

\paragraph{Human Preference Data}
Preference data captures choices between options but not the underlying rationale. In AI alignment, methods like RLHF~\citep{christiano2017deep, ouyang2022training} and DPO~\citep{rafailov2023direct} use this data to shape model behavior, but they encode preferences implicitly within model weights, resulting in opaque policies that are difficult to audit~\citep{casper2023open}. The central challenge is transforming large volumes of opaque preference data into an understandable model of the underlying decision-making process.

\paragraph{Constitutional AI and ICAI}
Constitutional AI (CAI)~\citep{bai2022constitutional} uses manually defined principles to guide model training. Follow-up work by \citet{huang2024collective} extends Constitutional AI to develop the constitution collaboratively with a broader public. Inverse Constitutional AI (ICAI)~\citep{findeis2025inverse} reverses this: given a preference dataset, it automatically discovers principles that explain the observed judgments. ICAI samples pairs from the dataset, prompts an LLM to propose principles justifying the observed preference, and then applies these principles to reconstruct labels by again prompting an LLM. Follow-up work by \citet{movva2025WhatsMyHuman} explores the use of sparse auto-encoders (SAEs) as an alternative to ICAI's LLM-based pipeline, and work by \citet{kingslin2026democratic} investigates improving ICAI principles via structured persona debate. While ICAI and related methods provide valuable insights, we argue they leave key aspects of the decision procedure under-specified.

\paragraph{LLM-as-a-Judge}
Using LLMs to evaluate text quality has become widespread~\citep{zheng2023judgingllmasajudgemtbenchchatbot}. Methods range from direct prompting to criteria-based evaluation (G-Eval~\citep{liu2023g_eval}) to fine-tuned evaluator models (Prometheus~\citep{kim2024prometheus}), to the use of prompt-specific rubrics~\citep{gunjal2025rubrics}. A common assumption is that providing explicit criteria makes evaluation transparent and reproducible. We show this assumption is questionable: the same criteria can yield different judgments depending on the executor model.

\paragraph{Interpretable Machine Learning}
Our work draws on interpretable ML, which emphasizes models that are inherently understandable~\citep{rudin2019stop}. Decision trees and forests~\citep{breiman1984classification,breiman2001random,ke2017lightgbm} provide explicit, auditable decision rules. We compare these as explicit executors against implicit LLM-based composition.

%% file: sections/motivation.tex
\newpage
\section{Preference Reconstruction: Open Problems}

We formalize three open problems that arise when a constitution is treated as an ``explanation'' of a preference dataset. Throughout, we distinguish three roles in the reconstruction stack: a \emph{discoverer} model $M_D$ that proposes principles, an \emph{annotator} model $M_A$ that operationalizes principles into per-example votes, and an \emph{executor} that composes votes into final predictions (\Cref{fig:pipeline}).

\subsection{Problem 1: Measuring Principle Quality}

A natural goal is to summarize the quality of a principle $P$ by how often it applies and how often it agrees with ground truth when it applies.

\begin{definition}[Coverage and accuracy]
Let $\mathcal{D}=\{(A_i,B_i,y_i)\}_{i=1}^{|\mathcal{D}|}$ be a dataset of labeled pairwise comparisons (e.g., a train or test split), where $y_i\in\{\text{A},\text{B}\}$ denotes which response is preferred.
Given a principle $P$ and an annotator-produced vote $v_i^P\in\{\text{A},\text{B},\text{N/A}\}$ for each comparison $i$ (with N/A meaning ``not applicable''), \textbf{coverage} is the fraction of comparisons with $v_i^P\neq \text{N/A}$, and \textbf{accuracy} is the fraction of covered comparisons on which $v_i^P=y_i$.
\label{def:coverage_and_accuracy}
\end{definition}




\subsection{Problem 2: Ambiguity in Principle Composition}

A constitution is a list of principles; applying it requires \emph{composing} principle-level votes into a single decision. Many ICAI-style pipelines delegate composition to an LLM judge prompted with the constitution, which implicitly supplies precedence relations, context-dependence, and trade-offs.

We treat executor sensitivity as a diagnostic. Holding principles fixed, we compare:
(i) \emph{implicit} LLM execution (LLM-as-a-judge with the constitution text), and
(ii) \emph{transparent} executors over principle votes: majority vote, priority vote (highest-accuracy applicable principle), and tree-based models trained on vote features.
We report reconstruction accuracy (Definition \ref{def:reconstruction_accuracy}) and \textbf{inter-executor agreement} (pairwise agreement rates on predicted labels). Disagreement does not imply that any executor is ``wrong''; rather, it quantifies underspecification in the written constitution and the extent to which composition logic is being supplied by the executor.

\begin{definition}[Reconstruction accuracy and inter-executor agreement]
Fix a split $\mathcal{S}=\{(A_i,B_i,y_i)\}_{i=1}^n$ and an executor $E$ that outputs predictions $\hat y_i^{(E)}\in\{\text{A},\text{B}\}$. For two executors $E$ and $E'$, the \textbf{reconstruction accuracy} of $E$ on $\mathcal{S}$ is defined as
\begin{align}
\mathrm{RA}_{\mathcal{S}}(E) &:= \frac{1}{n}\sum_{i=1}^n \mathbf{1}\!\left[\hat y_i^{(E)} = y_i\right],
\end{align}
and their \textbf{(inter-executor) agreement} on $\mathcal{S}$ as
\begin{align}
\mathrm{Agr}_{\mathcal{S}}(E,E') &:= \frac{1}{n}\sum_{i=1}^n \mathbf{1}\!\left[\hat y_i^{(E)} = \hat y_i^{(E')}\right].
\end{align}
\label{def:reconstruction_accuracy}
\end{definition}

\subsection{Problem 3: Cross-\LLM{} Transferability}

Constitutions are written in natural language and therefore appear model-agnostic, but in practice both discovery and annotation can be model-dependent.
First, a discoverer model may induce one set of principles from a dataset while another model induces a semantically different set. Second, even when the principle text is fixed, different annotator models may disagree about whether the principle applies, or about which response better satisfies it.
Quantitatively, we focus on annotator transferability. For fixed principle text, we measure agreement between annotator models on two questions: (i) whether the principle applies to a comparison (coverage agreement) and (ii) conditional on both models marking the principle applicable, which response the principle favors (vote agreement). We also examine qualitatively whether different discoverer models can induce semantically different constitutions from the same dataset, but still achieve similar reconstruction accuracy (Appendix~\ref{sec:llm-constitution-comparison}).

%% file: sections/method.tex
\section{Methods: ICAI and ICAI+}

Baseline ICAI~\citep{findeis2025inverse} seeks a constitution $\mathcal{C}=\{P_1,\dots,P_d\}$: a set of natural-language principles intended to explain observed preferences in a dataset $\mathcal{D}=\{(A_i,B_i,y_i)\}_{i=1}^N$. A discoverer \LLM{} $M_D$ is prompted with sampled comparisons to propose principles, then uses an annotator \LLM{} $M_A$ to produce per-example votes for each principle. A constitution is then selected and used to reconstruct labels, often by prompting an executor \LLM{} $M_E$ with the full constitution and the comparison.

This pipeline exposes two interpretability limitations that motivate our study.
First, \textbf{flat principle lists under-specify composition}: a list of principles does not uniquely determine precedence, trade-offs, or context-based gating.
Second, \textbf{naive discovery often yields low-utility principles}: many candidates have extremely low coverage, while broader candidates tend to be semantically vague, effectively deferring decision-making back to the executor.

We compare naive ICAI discovery to ICAI+, a refinement procedure that filters out low quality principles early on a small subsample of the training set and rewrites marginal principles using examples where they succeed and fail. We ask: \emph{do improvements in per-principle coverage and accuracy reliably translate into improvements in end-to-end reconstruction?} Our results suggest that coverage and accuracy alone are insufficient proxies once a baseline level of signal is achieved because redundancy, ambiguity, and cross-principle interactions also matter.

Unless stated otherwise, we report per-principle coverage and accuracy on the held-out test split $\mathcal{D}_{\text{test}}$ using the annotator $M_A$ associated with the reconstruction stack under study.
For ICAI+, filtering and rewriting (Algorithm~\ref{alg:icaiplus}) uses a disjoint calibration subset $\mathcal{S}\subset\mathcal{D}_{\text{train}}$ and does not access $\mathcal{D}_{\text{test}}$.

\subsection{ICAI+: Early Filtering and Targeted Refinement}

We study a lightweight refinement procedure (ICAI+) that modifies only the \emph{discovery} step, aiming to produce principles that are (i) higher-coverage and (ii) less semantically ambiguous.
ICAI+ performs (a) early filtering of low-signal candidates and (b) targeted rewriting of marginal candidates using small counterexample sets.
This is described in Algorithm~\ref{alg:icaiplus}, with further explanation of hyperparameters in Appendix~\ref{sec:icaiplus-parameters}.
Hyperparameters for experiments run can be seen in Appendix~\ref{sec:hyperparameters}.

\begin{figure}[b]
\vspace{-2.2em}
\begin{algorithm}[H]
\caption{ICAI+ principle refinement (high-level)}
\label{alg:icaiplus}
\begin{algorithmic}[1]
	\STATE \textbf{Input:}
		dataset $\mathcal{D}$;
		discoverer $M_D$;
		annotator $M_A$;
		min.\ annotations $k_{\textrm{min}}$;
		max.\ annotations $k_{\textrm{max}}$;
		min.\ votes $r_{\textrm{min}}$;
		min.\ coverage $c_{\textrm{min}}$;
		min.\ accuracy $a_{\textrm{min}}$;
		refinement threshold $a_r$;
		target count $d$.

\STATE Initialise $\mathcal{P}$ to the empty set. Sample a labeled calibration subset $\mathcal{S}\subset\mathcal{D}$ of size $k_{\textrm{max}}$.
\WHILE{$|\mathcal{P}| < d$}
	\STATE Generate an pool of candidate principles $\widetilde{\mathcal{P}}$ using ICAI-style prompts on sampled comparisons.
	\FOR{$(P, S_k)\in\widetilde{\mathcal{P}} \times \mathcal{S}$}
		\STATE Apply $P$ on $S_k$ with annotator $M_A$ to obtain vote $v_k^P$.
		\STATE Over $v^P$, compute number of applicable votes $r^P$, coverage $c^P$, and accuracy $a^P$.
		\STATE Discard $P$ if ($k > k_{\textrm{min}}$ and $c^P < c_{\textrm{min}}$) or ($r^P > r_{\textrm{min}}$ and $a^P < a_{\textrm{min}}$).
		\STATE If $a_{\textrm{min}} < a^P < a_r$, collect a small set of examples where $P$ succeeds/fails and prompt $M_D$ to rewrite $P$ to improve generality and reduce ambiguity. Add this to $\widetilde{\mathcal{P}}$.
	\ENDFOR
	\STATE Add remaining principles from $\widetilde{\mathcal{P}}$ to $\mathcal{P}$.
\ENDWHILE
\STATE Return the final principle set $\mathcal{P}$.
\end{algorithmic}
\end{algorithm}
\vspace{-2em}
\end{figure}

\subsection{Votes and Executors}

Given a constitution $\mathcal{C}$, the annotator $M_A$ produces votes $v_{ij}\in\{\text{A},\text{B},\text{N/A}\}$ for each comparison $i$ and principle $P_j$.
We compare \emph{opaque execution} to \emph{transparent execution}:
\begin{itemize}
    \item \textbf{Opaque (LLM-as-a-judge):} prompt an executor \LLM{} $M_E$ with the constitution text and comparison $(A_i,B_i)$ to directly predict $\hat y_i$.
    \item \textbf{Transparent:} Either \textit{rule-based}: majority vote over $\{v_{ij}\}$; and priority vote (highest-accuracy applicable principle), or \textit{learned}: a decision tree (scikit-learn~\citep{pedregosa2011scikit}) or LightGBM gradient-boosted trees~\citep{ke2017lightgbm}  trained on vote features.
\end{itemize}

\section{Experimental Setup and Datasets}

We evaluate on three \LLM{} preference datasets: AlpacaEval~\citep{alpaca_eval} (instruction-following preferences), PRISM (conversational preferences with social and ethical nuance) \cite{kirk2024prism}, and Chatbot Arena~\citep{chiang2024chatbotarenaopenplatform} (crowdsourced real-world user preferences). Dataset sizes can be seen in Appendix~\ref{sec:hyperparameters}. To reduce position effects, we symmetrize the data by swapping $A$ and $B$ and flipping votes.

For principle discovery, annotation, and implicit LLM execution we use GPT-4o and GPT-4o-mini for fair comparison against ICAI~\citep{findeis2025inverse}, along with Gemini 2.5 Flash, DeepSeek v3.1 Chat, GPT 5.4-nano, and GPT-5.4-mini. \citep{openai2024gpt4ocard,deepseek2024deepseekv3,gemini25pushingfrontier,openaigpt5card}. For explicit tree-based executors we use scikit-learn (decision trees) and LightGBM (gradient-boosted trees) \cite{ke2017lightgbm}. Prompt templates are provided in Appendix~\ref{app:prompts}.


For each dataset and model configuration, we: (1) discover candidate principles using naive ICAI or ICAI+; (2) annotate the training set with principle-level votes $v_{ij}\in\{\text{A},\text{B},\text{N/A}\}$; (3) define an executor using training annotations (e.g., select a subset of principles for flat constitutions, or fit a tree-based executor); (4) execute on held-out test data to predict preferences; and (5) evaluate reconstruction accuracy and inter-executor agreement.

To define the flat executors, we greedily add to the constitution the principles that maximize majority-vote reconstruction accuracy on training data (Algorithm~\ref{alg:greedy_constitution}). This encourages selecting \emph{complementary} principles that correct one another's errors, rather than near-duplicates with overlapping coverage; details and comparison with other principle selection strategies appear in Appendix~\ref{sec:gmv-metric-comparison}. We chose 30 as the number of principles to add based on an ablation study which can be seen in Appendix~\ref{sec:constitution-size-ablation}.

To measure annotator transferability, we sample 10 ICAI and 10 ICAI+ principles from each discoverer model and have all models annotate these; we report cross-model agreement in Section~\ref{section:results.transferability}. Unless stated otherwise, we report means over the $18$ dataset--model configurations ($3$ datasets $\times$ $6$ model stacks) $\pm$ 95\% confidence $t$-intervals across these $18$ values.


%% file: sections/results.tex
\vspace{-0.2em}
\section{Results}
\label{section:results}
\vspace{-0.2em}

We report two complementary views of preference reconstruction:
(i) \textbf{end-to-end reconstruction accuracy} (agreement with held-out labels), and
(ii) \textbf{executor sensitivity} measured by \textbf{inter-executor agreement} when holding the constitution fixed.
Table~\ref{tab:reconstruction} summarizes reconstruction accuracy across executors for naive ICAI versus ICAI+ principles.

\vspace{-0.1em}
\subsection{Problem 1: Principle Quality}
\vspace{-0.1em}

ICAI+ improves principle-level utility. In a representative case (AlpacaEval, DeepSeek v3.1 Chat), ICAI+ shifts the discovered principles toward higher coverage and moderately higher accuracy (Figure~\ref{fig:coverage_accuracy}). Qualitatively, naive ICAI frequently proposes narrow surface-form principles, while ICAI+ produces more task-relevant criteria (Figure~\ref{fig:example_constitutions}). Clustering visualizations further suggest reduced redundancy under ICAI+ (Figure~\ref{fig:kmeans}).

\input{figures/principle_clustering.tex}


However, these per-principle improvements translate into only modest end-to-end gains: averaged across datasets and model settings, reconstruction increases by \(\approx 1\) percentage points, from $64.5\% \pm 2.0$ (ICAI, LLM-as-a-judge) to $65.5\% \pm 2.1$ (ICAI+, LLM-as-a-judge) (Table~\ref{tab:reconstruction}), suggesting that coverage/accuracy alone are poor proxies once principles clear a baseline level of signal. More broadly, \emph{principle quality is not one-dimensional}: a principle can be accurate but redundant, broad but under-specified, or effective only under a particular composition rule. As a result, reconstruction depends on interactions among principles and on the executor, not just on any single per-principle metric. Despite improved coverage and accuracy, defining principle-quality measures that reliably predict end-to-end reconstruction remains an open challenge.


\vspace{-0.1em}
\subsection{Problem 2: Composition Ambiguity}
\vspace{-0.1em}

Executor dependence is substantial under naive principles, consistent with a composition gap. With the same principles, transparent executors underperform the LLM judge on average (e.g., majority vote test $60.7\% \pm 1.9$ vs.\ LLM test $64.5\% \pm 2.0$; Table~\ref{tab:reconstruction}) and disagree with it on a nontrivial fraction of comparisons (LLM vs.\ majority agreement $73.4\% \pm 4.2\%$; Table~\ref{tab:agreement_pair}). This indicates that the written constitution does not uniquely determine the effective decision rule: implicit composition logic supplied by the executor materially affects predictions.

With ICAI+ principles, both the performance gap and inter-executor disagreement narrow. Majority vote reaches $64.6\% \pm 2.6$ versus $65.5\% \pm 2.1$ for the LLM judge on test (Table~\ref{tab:reconstruction}), and LLM versus majority agreement increases to $78.4\% \pm 4.1$ (Table~\ref{tab:agreement_pair}). An ablation demonstrating refinement improves over only filtering can be seen in Appendix~\ref{sec:principle-refinement-ablation}. Overall, refinement makes principles more operational and reduces the implicit composition logic supplied by the judge. Nevertheless, executor sensitivity remains: the same principles can yield meaningfully different outcomes under different executors, so constitutions should be evaluated as \emph{constitution--executor systems}, not standalone texts.

\paragraph{Why do trees underperform?}

Although decision trees can represent arbitrary compositions of principle votes, representing a counting-style rule like majority vote is exponentially inefficient in the number of principles: exact majority requires exponentially many leaves. This creates a representation-efficiency mismatch under depth/leaf constraints, biasing learned trees toward sparse, priority-like rules. We formalize this in Lemma~\ref{lem:majority-tree-size} and prove it in Appendix~\ref{app:tree_proof}.

\begin{lemma}[Majority has exponential decision-tree size]
\label{lem:majority-tree-size}
Let $d$ be odd and let $\mathrm{Maj}_d:\{-1,+1\}^d\to\{-1,+1\}$ be binary majority,
\(
\mathrm{Maj}_d(x) = \mathrm{sign}\!\Big(\sum_{j=1}^d x_j\Big).
\)
Any (deterministic) decision tree that computes $\mathrm{Maj}_d$ exactly must have at least $2^{(d-1)/2}$ leaves.
Consequently, any decision tree that computes the ternary majority rule $g_{\mathrm{MV}}$ on $\{-1,0,+1\}^d$ exactly must have at least $2^{(d-1)/2}$ leaves.
\end{lemma}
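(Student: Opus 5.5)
The plan is to use the standard fact that a decision tree computing $f$ exactly has at least as many leaves as the number of pairwise-disjoint subcubes needed to partition $\{-1,+1\}^d$ into $f$-monochromatic pieces. Each leaf of a deterministic tree corresponds to a subcube, namely the points consistent with the path to that leaf, and these subcubes partition the domain. So it suffices to show that every subcube on which $\mathrm{Maj}_d$ is constant contains few points. Each leaf must then cover at most that many inputs, and dividing $2^d$ by this bound gives the leaf count.

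First I bound the size of a monochromatic subcube. Let a leaf fix a set $S$ of coordinates, so its subcube has $2^{d-|S|}$ points. Suppose $|S| \le (d-1)/2$, so at least $(d+1)/2$ coordinates are free. Setting every free coordinate to $+1$ makes the sum at least $(d+1)/2 - (d-1)/2 = 1 > 0$. Setting them all to $-1$ makes the sum at most $-1$. Hence the majority is not constant on that subcube, which contradicts exact computation.

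It follows that every leaf fixes at least $(d+1)/2$ coordinates, so every leaf lies at depth at least $(d+1)/2$. The number of leaves is $\sum_\ell 2^{d-|S_\ell|} \cdot 2^{|S_\ell|-d}$. Since the subcubes partition the domain, $\sum_\ell 2^{d-|S_\ell|}=2^d$. Each term $2^{d-|S_\ell|}$ is at most $2^{(d-1)/2}$, so the number of leaves is at least $2^d/2^{(d-1)/2}=2^{(d+1)/2}\ge 2^{(d-1)/2}$. There is also a direct route: every root-to-leaf path has length at least $(d+1)/2$ and the tree is binary, so the full binary tree of that depth is embedded, giving at least $2^{(d+1)/2}$ leaves.

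For the ternary rule $g_{\mathrm{MV}}$ on $\{-1,0,+1\}^d$, I would use a restriction argument. The rule presumably sums the non-N/A votes, and I assume $g_{\mathrm{MV}}$ agrees with $\mathrm{Maj}_d$ on $\{-1,+1\}^d$, since no abstentions occurs and there are no ties because $d$ is odd. Take a ternary decision tree for $g_{\mathrm{MV}}$; it may be binary with threshold splits on features, or three-way. Restricting its inputs to $\{-1,+1\}^d$ prunes branches that can never be reached and collapses any internal node whose query becomes trivial. This produces a decision tree over $\{-1,+1\}^d$ computing $\mathrm{Maj}_d$ with no more leaves than the original. By the first part, it has at least $2^{(d-1)/2}$ leaves.

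The main obstacle is the restriction step. I need to make precise that any node type allowed, such as threshold tests $x_j\le t$ as in scikit-learn or equality tests, restricts to either a trivial test or a test of the value of $x_j\in\{-1,+1\}$. I also need that pruning does not increase the leaf count. Both claims are routine once I fix the node model.
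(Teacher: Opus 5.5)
Your argument is correct and is essentially the paper's proof: leaves are subcubes, each leaf must fix at least $(d+1)/2$ coordinates and so covers at most $2^{(d-1)/2}$ inputs, and counting plus restriction to $\{-1,+1\}^d$ gives both claims. The only difference is in the counting step: you count all leaves against all $2^d$ inputs and get $2^{(d+1)/2}$, whereas the paper counts only $+1$-leaves against the $2^{d-1}$ positive inputs and gets exactly $2^{(d-1)/2}$.
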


Lemma~\ref{lem:majority-tree-size} shows that \emph{exactly} representing a counting-style rule like majority requires exponentially many leaves in $d$ (and therefore an exponentially large tree in the worst case). In contrast, priority vote is easy: it is representable by a depth-$d$ tree with only $2d+1$ leaves (follow the fixed order; on $\pm 1$ output immediately; on $0$ continue). Thus, when we constrain trees to remain low-complexity, either for interpretability (small depth) or for statistical reasons (to avoid overfitting on noisy vote features), tree learning has a strong inductive bias toward \emph{priority-like} and against \emph{majority-like} aggregation.

\input{figures/coverage_accuracy.tex}
\input{tables/performance.tex}
\input{tables/inter_executor_agreement.tex}

\paragraph{Connection to our empirical setting}
Learned trees and boosted ensembles are typically regularized (e.g., by limiting maximum depth / number of leaves / ensemble complexity), as otherwise they become uninterpretable and prone to overfitting. Under such constraints, the model class may simply be too small to express an additive ``sum-of-many-weak-signals'' composition like majority, even if it is a good fit for the data. This may explain why explicit majority vote can outperform tree-based executors in Table~\ref{tab:reconstruction}: majority implements a composition of \emph{all} principle votes, whereas trees tend to implement a sparse, precedence-driven rule that conditions on only a few principles along any path.

Note that LightGBM (an ensemble of trees of principles, rather than a single decision tree of principles) performs roughly on par with majority vote (Table~\ref{tab:reconstruction}), plausibly because an ensemble can approximate threshold-like ``counting'' rules with much lower effective complexity than a single small tree. However, it does not reliably \emph{outperform} majority vote, suggesting that in our setting, tree ensembles are unable to discover, or possibly represent, a substantially better composition rule than majority-vote aggregation. The remaining gap to LLM-as-a-judge execution therefore seems less like a simple failure of choosing the right transparent model, and more like evidence that the LLM judge is adding a complex or opaque composition policy.



\subsection{Problem 3: Cross-LLM Transferability}\label{section:results.transferability}

We find that applying the \emph{same} principle text is not reliably stable under model substitution. Different \LLM{} annotators disagree both on whether a principle is applicable (coverage agreement; Table~\ref{tab:cross_llm_agreement_icai_pair}) and on vote direction conditional on applicability (vote agreement; Table~\ref{tab:cross_llm_agreement_icai_pair}). For naive ICAI principles, mean coverage agreement is $82.8\% \pm 1.1$, while vote agreement is $72.2 \pm 2.0$, indicating that even when two models agree a principle applies, they often disagree on which response it favors. For comparison, intra-LLM agreement (the same model evaluating the same principles across repeated runs) yields coverage agreement of $87.0\% \pm 2.1$ and vote agreement of $80.5\% \pm 3.2$, indicating that a substantial portion of the cross-LLM disagreement reflects genuine differences in how models interpret principle text rather than measurement noise.

ICAI+ does not significantly improve this instability. Vote agreement increases to $73.0\% \pm 2.1$, but coverage agreement decreases to $80.9\% \pm 1.1$ (Table~\ref{tab:cross_llm_vote_agreement_icaiplus}). This is consistent with ICAI+ producing principles which are less semantically ambiguous, but thus more complex --- as seen in Table~\ref{tab:cross_llm_vote_agreement_icaiplus}, the drop in coverage agreement is largely due to GPT 5.4-nano, the weakest model. Taken together, these results suggest that constitutions are not intrinsically model-agnostic artifacts: their operational meaning depends on the model used to interpret them. This limits transferability and interpretability, and motivates reporting the full reconstruction stack (discoverer/annotator/executor) when using constitutions for auditing or evaluation.

At the discovery level, constitutions induced by different models also diverge semantically (Appendix~\ref{sec:llm-constitution-comparison}), reinforcing the conclusion that both \emph{what} principles are induced and \emph{how} they are interpreted can be model-dependent.

%% file: figures/principle_clustering.tex
\begin{figure}[t]
  \centering
  \begin{subfigure}{0.49\textwidth}
    \centering
    \includegraphics[width=\linewidth,height=0.30\textheight,keepaspectratio]{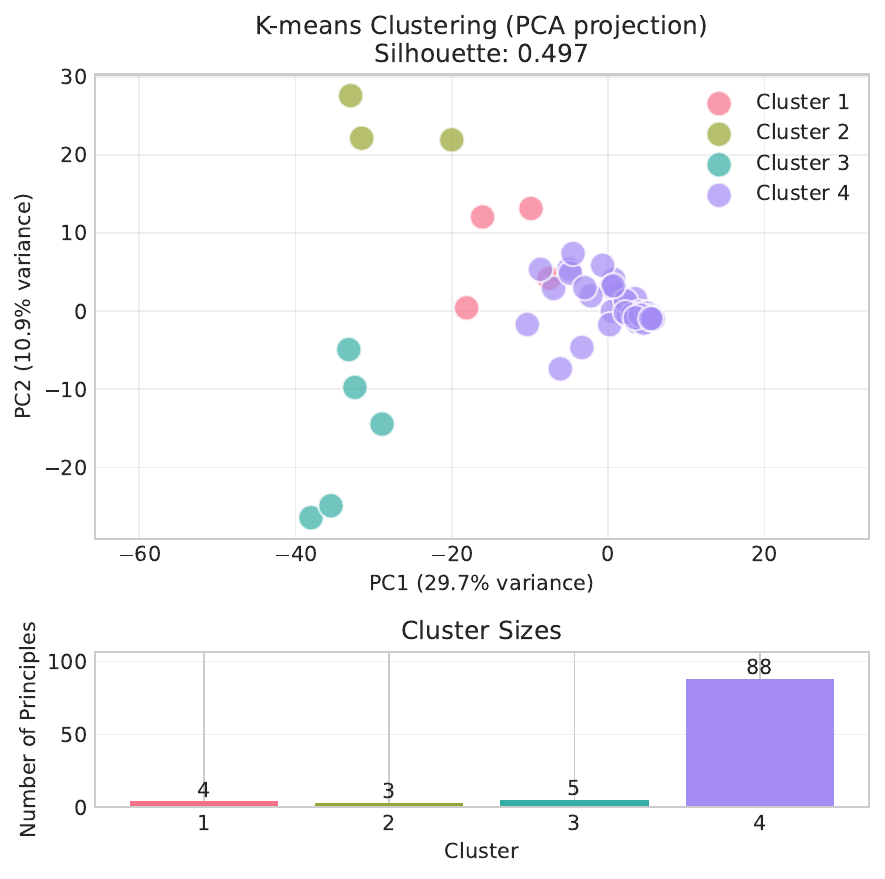}
    \caption{Naive principle generation}
    \label{fig:kmeans_naive}
  \end{subfigure}\hfill
  \begin{subfigure}{0.49\textwidth}
    \centering
    \includegraphics[width=\linewidth,height=0.30\textheight,keepaspectratio]{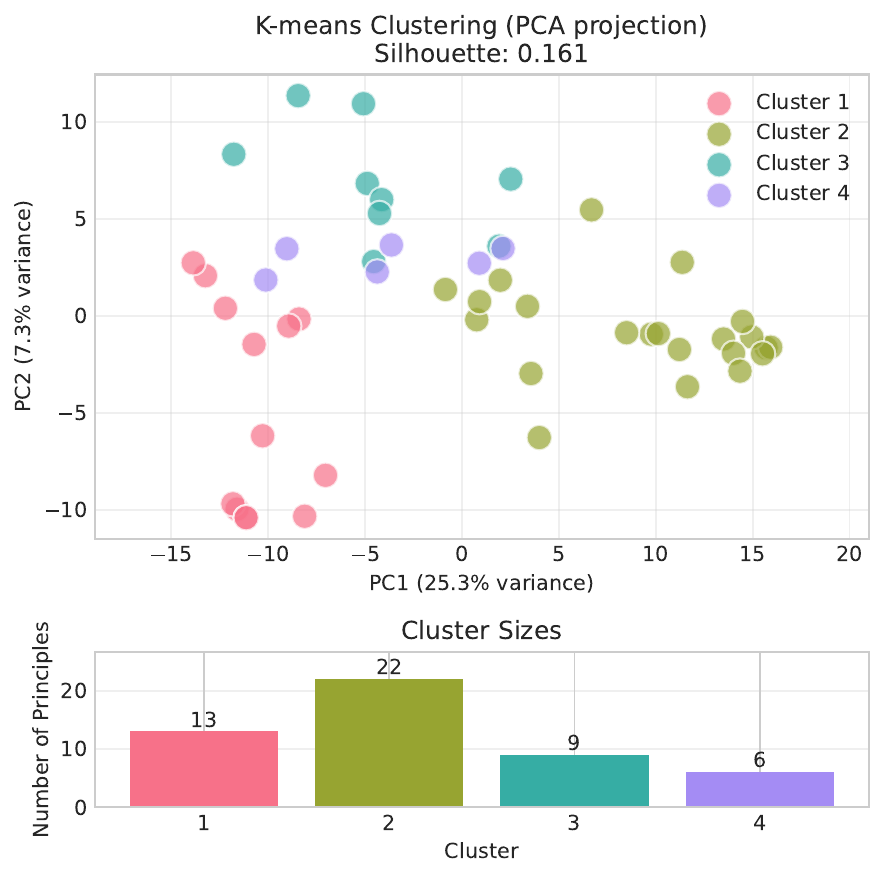}
    \caption{Improved principle generation}
    \label{fig:kmeans_improved}
  \end{subfigure}
	\caption{
		K-means clustering of generated principles, between methods, on AlpacaEval using DeepSeek v3.1 Chat. Cluster sizes show that the improved principle generation procedure produces more varied principles, whereas the naive principle generation produces a single dominant cluster.
	}
    \vspace{-0.5em}
  \label{fig:kmeans}
\end{figure}

%% file: figures/coverage_accuracy.tex
\begin{figure}[t]
  \centering
  \begin{subfigure}{0.85\textwidth}
    \centering
    \includegraphics[width=\linewidth,height=0.33\textheight,keepaspectratio]{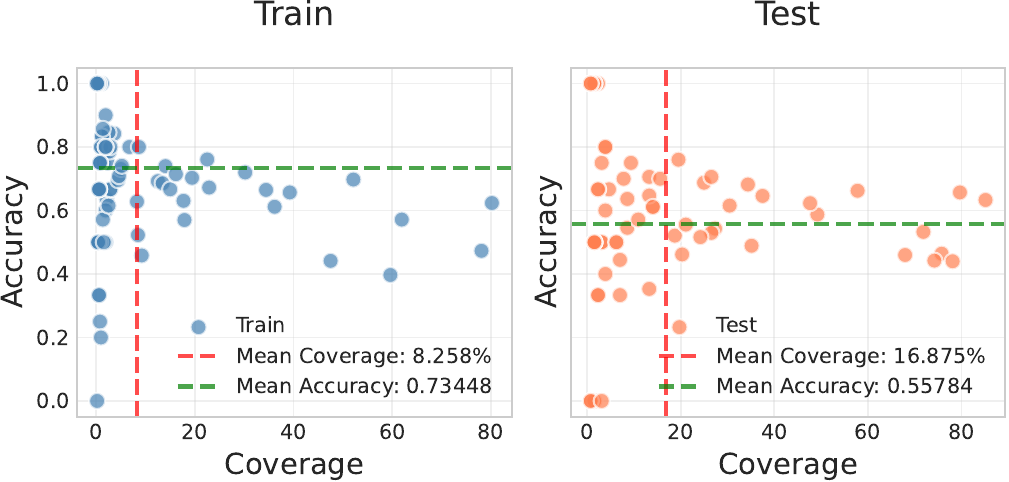}
    \caption{Naive principle generation}
    \label{fig:coverage_accuracy_icai}
    \vspace{1em}
  \end{subfigure}
  \begin{subfigure}{0.85\textwidth}
    \centering
    \includegraphics[width=\linewidth,height=0.33\textheight,keepaspectratio]{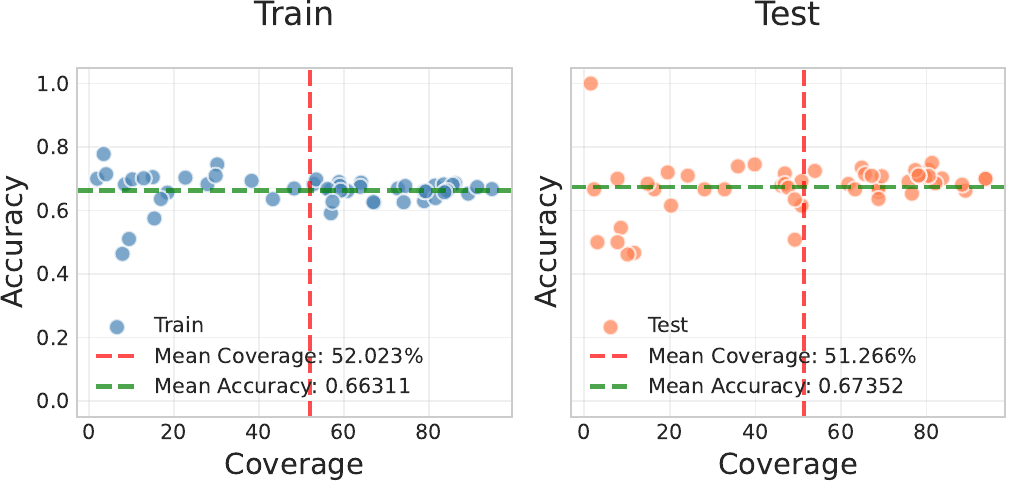}
    \caption{Improved principle generation}
    \label{fig:coverage_accuracy_icaiplus}
    \vspace{0.5em}
  \end{subfigure}
  \caption{Coverage and accuracy of generated principles on AlpacaEval using DeepSeek v3.1 Chat. ICAI+ yields higher-coverage and moderately higher-accuracy principles.}
  \label{fig:coverage_accuracy}
  \vspace{-0.5em}
\end{figure}

%% file: tables/performance.tex
\begin{table}[b]
  \vspace{-0.5em}
  \centering
  \small
  \setlength{\tabcolsep}{5pt}
  \renewcommand{\arraystretch}{1.1}
  \caption{Performance (\%) across executors with the same principles. ICAI+ moderately improves all methods, and closes the gap between opaque and transparent executors. Full results in Appendix~\ref{sec:full-results}.}
  \label{tab:reconstruction}
  \vspace{1em}
  \begin{tabular}{lcccc}
    \toprule
    & \multicolumn{2}{c}{Performance (train)} & \multicolumn{2}{c}{Performance (test)} \\
    \cmidrule(lr){2-3}\cmidrule(lr){4-5}
    Aggregation method & ICAI & ICAI+ & ICAI & ICAI+ \\
    \midrule
    LLM           & $66.4 \pm 1.6$ & $67.2 \pm 1.7$ & $64.5 \pm 2.0$ & $65.5 \pm 2.1$ \\
    Majority vote & $69.5 \pm 1.8$ & $70.2 \pm 1.8$ & $60.7 \pm 1.9$ & $64.6 \pm 2.6$ \\
    Priority vote & $65.2 \pm 2.1$ & $67.6 \pm 1.9$ & $60.1 \pm 2.0$ & $62.5 \pm 2.2$ \\
    LightGBM      & $68.8 \pm 2.5$ & $70.4 \pm 1.7$ & $61.3 \pm 2.2$ & $64.6 \pm 2.1$ \\
    Decision Tree & $63.2 \pm 2.8$ & $67.2 \pm 2.1$ & $58.0 \pm 3.0$ & $62.3 \pm 2.1$ \\
    \bottomrule
  \end{tabular}
\end{table}

%% file: tables/inter_executor_agreement.tex
\begin{table}[b]
  \vspace{-0.75em}
  \centering
  \setlength{\tabcolsep}{2pt}
  \renewcommand{\arraystretch}{1.10}
  \caption{Test agreement (\%) across executors with the same principles. ICAI+ moderately improves all agreements. MV=majority vote, PV=priority vote, GBM=LightGBM, DT=decision tree.}
  \label{tab:agreement_pair}
  \begin{subtable}[t]{0.48\textwidth}
    \centering
    \caption{ICAI (naive) principles}
    \label{tab:agreement_naive}
	\adjustbox{max width=\linewidth}{
	\begin{tabular}{lcccc}
      \toprule
      & LLM & MV & PV & GBM \\
      \midrule
      LLM & -- & -- & -- & -- \\
      MV  & $73.4 \pm 4.2$ & -- & -- & -- \\
      PV  & $71.0 \pm 4.2$ & $83.4 \pm 3.0$ & -- & -- \\
      GBM & $72.8 \pm 4.9$ & $80.4 \pm 3.9$ & $80.6 \pm 4.0$ & -- \\
      DT  & $67.3 \pm 6.0$ & $74.7 \pm 5.4$ & $83.6 \pm 7.7$ & $74.4 \pm 7.3$ \\
      \bottomrule
	\end{tabular}
	}
  \end{subtable}\hfill
  \begin{subtable}[t]{0.48\textwidth}
    \centering
    \caption{ICAI+ (improved) principles}
    \label{tab:agreement_icaiplus}
	\adjustbox{max width=\linewidth}{
	\begin{tabular}{lcccc}
      \toprule
      & LLM & MV & PV & GBM \\
      \midrule
      LLM & -- & -- & -- & -- \\
      MV  & $78.4 \pm 4.1$ & -- & -- & -- \\
      PV  & $75.2 \pm 4.3$ & $85.7 \pm 3.0$ & -- & -- \\
      GBM & $76.5 \pm 4.5$ & $89.9 \pm 3.4$ & $87.8 \pm 3.2$ & -- \\
      DT  & $74.8 \pm 4.4$ & $83.7 \pm 3.7$ & $95.5 \pm 3.3$ & $87.1 \pm 3.5$ \\
      \bottomrule
    \end{tabular}
	}
  \end{subtable}
\end{table}

%% file: sections/discussion.tex
\subsection{Analysis}

Our results demonstrate that a constitution alone is not a complete decision rule.
It lists what principles matter, but it does not say how to use those principles when several apply, when they conflict, or when none of them quite fit.
That missing step matters empirically: with the same constitution, different executors can produce different labels.
Reconstruction accuracy should therefore be read as a property of the constitution--executor system, not of the principle text alone.

This also changes how to interpret the strong performance of constitutions executed with an LLM-as-a-judge.
When principles are narrow, repetitive, or vague, a transparent executor can only aggregate the explicit principle-level votes.
An LLM judge has more freedom: it can decide which principles matter, smooth over vague wording, import missing criteria, and resolve conflicts using its own background preferences.
This can improve reconstruction, but it reduces the explanatory power of the constitution -- some of the accuracy belongs to the judge's implicit policy, not the constitution itself.

The results with ICAI+'s improved principles support this interpretation.
Under naive ICAI principles, the LLM judge has a clear advantage in reconstruction accuracy over majority vote, but after refinement, the performance gap is well within error bars, and agreement increases.
This suggests that the original gap was not only due to the executor being stronger, but also due to weaknesses in the principle list.
The remaining disagreement shows that the problem is not solved, but the ICAI+ demonstrates that choosing a more auditable executor is not necessarily always at the expense of reconstruction accuracy, and gives a useful check for future discovery methods: after principles are discovered, how much do their decisions still depend on which executor applies them?

Cross-\LLM{} transferability exposes the same problem one step earlier in the stack.
While executor sensitivity shows that a constitution does not specify how principle-level votes should be composed, cross-\LLM{} disagreement shows that the principle text may not even determine the votes themselves.
Two annotator models can read the same principle and disagree about whether it applies or which response it favors.
In that case, the principle is not a full explanation of the data; its meaning is partly determined by the annotator that applies it.
We hope that a broader impact of this work will be demonstrating caution is necessary when interpreting LLM artifacts as true explanations of underlying data, and constitutions should not be reported as standalone: the discoverer, annotator, and executor all affect what they mean operationally.
We also note that even with improved metrics, using LLMs to explain human decisions is prone to generating bias and a misleading sense of understanding.

\paragraph{Implications for forward preference learning}

Pairwise labels used in RLHF and DPO are usually treated as simple targets for training.
Under our framing, each label is instead the endpoint of a hidden decision process: criteria were applied, conflicts were handled, and a final choice was made.
Preference reconstruction is useful because it lets us inspect, before training, what kind of rule a preference learner would be asked to imitate.

This enables us to audit the preferences our model tries to learn during training.
If a reconstructed preference changes when we swap the executor, or when another annotator model applies the same principles, then the current constitution has not pinned down the decision rule, and this inconsistency applies to constitutions used for training just as it applies to our setting.
These cases should not be dismissed as ordinary noise; they mark places where the supervision is underspecified: a principle may be too vague, or the constitution may fail to say how competing criteria should be traded off.

A practical use of these diagnostics is therefore to find the brittle parts of a preference dataset or candidate constitution before they are used downstream.
Unstable examples can be flagged for human-led principle revision, or closer inspection.
Preference labels can hide executor-specific choices, and reconstruction can expose some of those choices before they used in training.

%% file: sections/conclusion.tex
\section{Conclusion}

Preference datasets are widely used to train and evaluate \LLMs{}, but they are difficult to audit because they encode individual decisions between options and not the underlying rationale. ICAI-style constitutions aim to improve legibility by compressing preferences into a short list of natural-language principles. We argued that this is only a partial solution: a flat constitution is not yet an executable rule, because it leaves principle composition implicit and therefore executor-dependent.

Empirically, we identified three open problems: (i) principle quality is hard to measure using per-principle coverage and accuracy alone; (ii) holding principles fixed, different executors often disagree, indicating ambiguity in principle composition; and (iii) constitutions are not reliably transferable across model stacks, both at the level of discovered principle sets and at the level of applying a given principle. We showed that improving the principle-generation procedure with ICAI+ narrows executor dependence and makes transparent executors competitive with LLM judges. Our results motivate evaluating constitutions as \emph{constitution--executor systems} and treating executor robustness and transferability as core requirements for auditable preference reconstruction.

\paragraph{Limitations}

We focused on reconstruction accuracy as the primary quantitative metric of a constitution; while we discuss other desiderata for constitutions qualitatively (e.g., human interpretability, redundancy), they are difficult to measure. Additionally, ICAI+ improves coverage and reduces ambiguity but does not eliminate executor dependence entirely. More principled methods for discovering unambiguous, transferable principles remain an open problem.

%% file: sections/acknowledgements.tex
\section*{Acknowledgments}

Eleanor Clifford was supported by the Engineering and Physical Sciences Research Council. Arduin Findeis was supported by a University of Cambridge School of Physical Sciences Award, the UKRI Centre for Doctoral Training in Application of Artificial Intelligence to the study of Environmental Risks (reference EP/S022961/1), a Trinity Hall PhD Tenth Term Funding Award and a Cambridge Philosophical Society Research Studentship.

%% file: appendices/code.tex
\section{Reproducibility}

All code and other resources required to reproduce the results in this paper is released open source and can be found at \url{https://r.ecc.im/OpenProblemsInCAI}.

%% file: appendices/hyperparameters.tex
\section{Summary of Algorithm~\ref{alg:icaiplus} hyperparameters}\label{sec:icaiplus-parameters}

\begin{itemize}
	\item \textbf{minimum annotations $k_{\textrm{min}}$.} Coverage estimation is unreliable on a very small number of samples, so principles are not discarded while the number of annotations is less than $k_{\textrm{min}}$. A higher value is required to select useful but low-coverage principles, but risks wasting API calls evaluating bad principles.

	\item \textbf{maximum annotations $k_{\textrm{max}}$.} The number of annotations required before a principle is accepted.

	\item \textbf{minimum applicable votes $r_{\textrm{min}}$.} Similar to $k_{\textrm{min}}$, but since accuracy is only estimated on applicable votes, it is a separate parameter.

	\item \textbf{minimum coverage $c_{\textrm{min}}$.} Minimum acceptable coverage for a principle to not be discarded.

	\item \textbf{minimum accuracy $a_{\textrm{min}}$.} Minimum acceptable accuracy for a principle to not be discarded.

	\item \textbf{refinement accuracy threshold $a_r$.} Minimum acceptable accuracy for a principle to be accepted without refinement.

	\item \textbf{target count $d$.} The target number of total principles to generate.
\end{itemize}

\section{Hyperparameters}\label{sec:hyperparameters}

ICAI was configured to generate 400 candidate principles per dataset and then cluster them down to 100 via $k$-means over cosine similarity.
ICAI+ was configured with the hyperparameters seen in Table~\ref{tab:icaiplus_hyperparams}.
In all experiments, it was configured to generate a total of 50 principles rather than 100.
This was to bring the total compute cost of each into a similar range, since ICAI+ uses significantly more compute in principle discovery generating principles and then discarding them.

Hyperparameters for the datasets used can be seen in Table~\ref{tab:dataset-params}.

\begin{table}[H]
	\centering
	\caption{Hyperparameters used with Algorithm~\ref{alg:icaiplus}}
	\label{tab:icaiplus_hyperparams}

	\begin{tabular}{lllllllll}
		\toprule
		LLM                 & dataset     & $k_{\textrm{min}}$  & $k_{\textrm{max}}$  & $r_{\textrm{min}}$  & $c_{\textrm{min}}$  & $a_{\textrm{min}}$  & $a_r$  & $d$  \\
		\midrule
		DeepSeek v3.1 Chat  & all         & 20                  & 100                 & 4                   & 0.04                & 0.55                & 0.65   & 50   \\
		GPT 4o              & all         & 50                  & 100                 & 4                   & 0.02                & 0.55                & 0.65   & 50   \\
		GPT 4o-mini         & all         & 50                  & 100                 & 4                   & 0.02                & 0.55                & 0.65   & 50   \\
		GPT 5.4-mini        & all         & 50                  & 100                 & 4                   & 0.02                & 0.55                & 0.65   & 50   \\
		GPT 5.4-nano        & all         & 50                  & 100                 & 4                   & 0.02                & 0.55                & 0.65   & 50   \\
		Gemini 2.5 Flash    & AlpacaEval  & 50                  & 100                 & 4                   & 0.02                & 0.60                & 0.70   & 50   \\
		Gemini 2.5 Flash    & Arena       & 20                  & 100                 & 4                   & 0.04                & 0.55                & 0.65   & 50   \\
		Gemini 2.5 Flash    & PRISM       & 20                  & 100                 & 4                   & 0.04                & 0.55                & 0.65   & 50   \\
		\bottomrule
	\end{tabular}
\end{table}

\begin{table}[H]
	\centering
	\caption{Dataset sizes}
	\label{tab:dataset-params}

	\begin{tabular}{lcc}
		\toprule
		Dataset     & Training samples  & Test samples  \\
		\midrule
		AlpacaEval  & 520  & 128  \\
		Arena       & 800  & 400  \\
		PRISM       & 800  & 400  \\
		\bottomrule
	\end{tabular}
\end{table}

%% file: appendices/principle_refinement_ablation.tex
\section{Ablation: Principle Refinement}\label{sec:principle-refinement-ablation}

In order to investigate whether principle refinement improves performance over just filtering, we took ICAI+'s final generated principles, randomly selected 12 that were either (a) directly generated without refinement or (b) refined at least once (both sets were filtered), and compared the average performance of a 10-principle majority vote from each set, across all datasets and models (12 was the largest equal size possible). The results can be seen in \ref{fig:principle-refinement-ablation}.

\begin{figure}[H]
	\centering
	\includegraphics[width=0.6\textwidth]{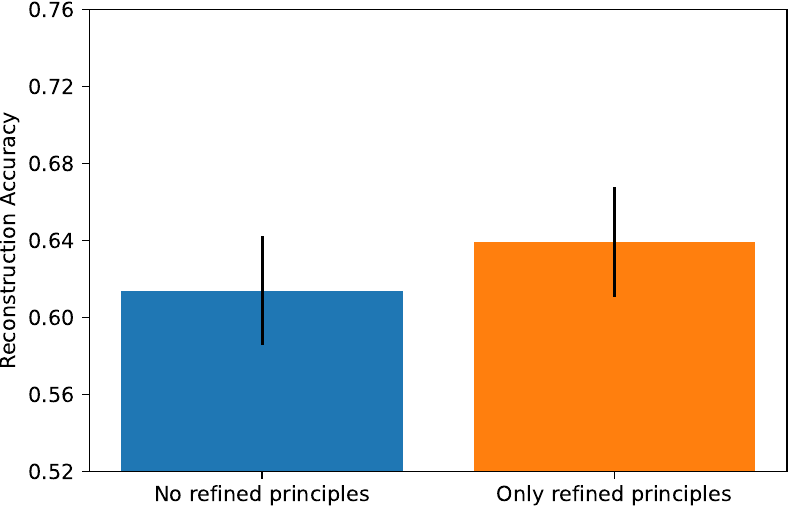}
	\caption{Performance of 10-principle majority vote executor with only refined and only unrefined principles.}
	\label{fig:principle-refinement-ablation}
\end{figure}

Reconstruction accuracy with refined principles is significantly higher than reconstruction accuracy with only unrefined principles. As we can see from Figure~\ref{fig:constitution-size-ablation}, aggregate results are generally relatively stable across constitution sizes, so this result should be expected to generalise to larger constitutions.

%% file: appendices/constitution_size_ablation.tex
\section{Ablation: Constitution Size}\label{sec:constitution-size-ablation}

In order to determine a reasonable constitution size, majority vote reconstruction accuracy was evaluated against constitution size. Majority vote is the highest-performing executor that would not require further compute to evaluate against constitution size (unlike LLM-as-a-judge). The results can be seen in Figure~\ref{fig:constitution-size-ablation}.

\begin{figure}[H]
	\centering
	\includegraphics[width=\textwidth]{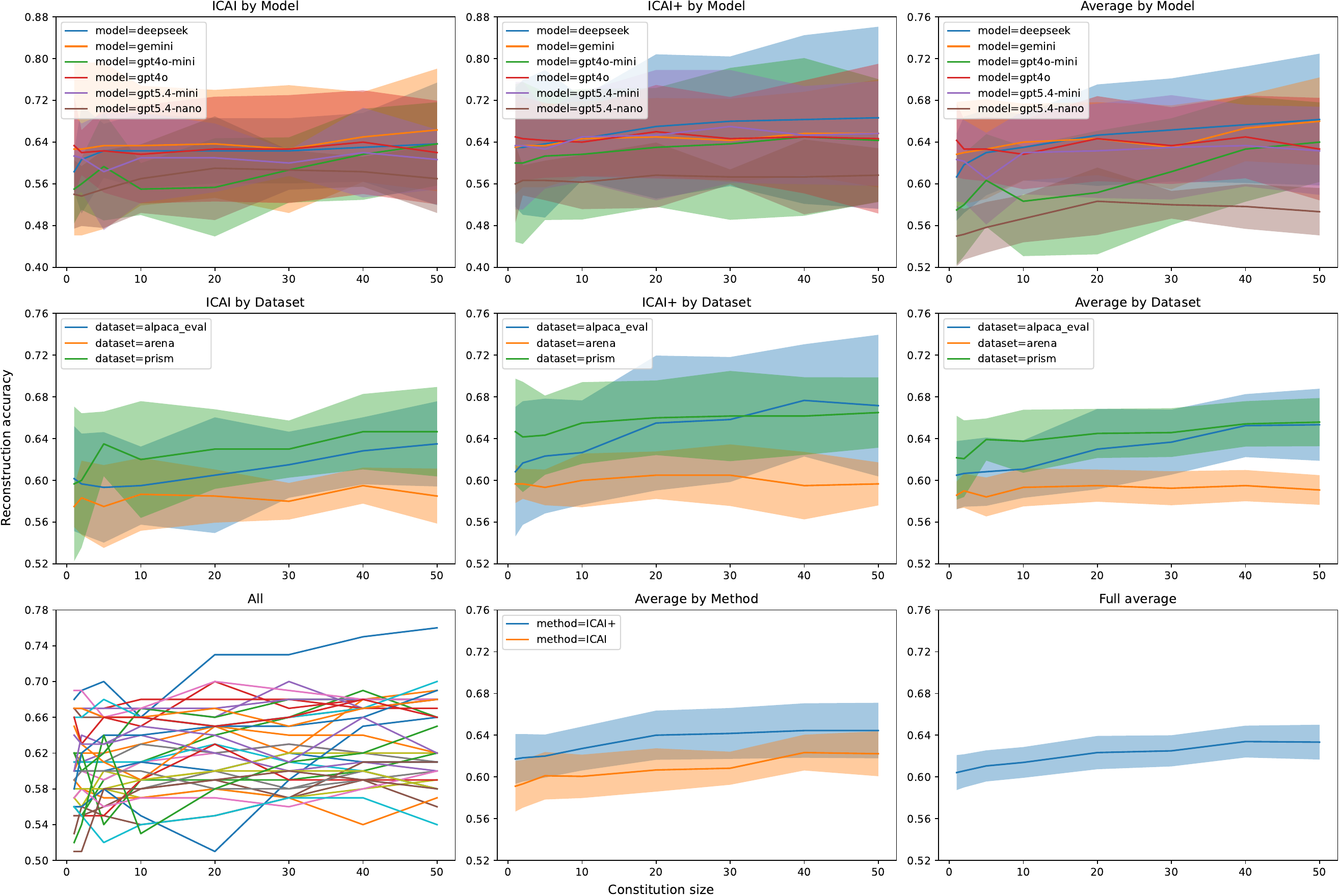}
	\caption{Performance of majority vote executor against number of principles in the constitution.}
	\label{fig:constitution-size-ablation}
\end{figure}

Improvement in reconstruction accuracy of ICAI+ is marginal after 20-30 principles, and below this there is significant noise in some datasets and models. On Chatbot Arena, performance actually decreases above 20-30 principles, because lower quality principles are added which can outvote the better ones. ICAI+ maintains a comfortable lead over ICAI at all sizes of constitution. Based on this, 30 was selected as a reasonable constitution size.

%% file: appendices/lemma_proof.tex
\section{Proof of Lemma \ref{lem:majority-tree-size}}
\label{app:tree_proof}

\begin{proof}[Proof sketch]
A leaf of a decision tree corresponds to a partial assignment of votes that yields a constant output of that tree. Consider any leaf labeled $+1$. If this leaf fixes fewer than $(d+1)/2$ coordinates to $+1$, then the remaining unfixed coordinates can be set to $-1$, yielding an input with strict $-1$ majority, contradicting correctness. Therefore every $+1$-leaf must fix at least $(d+1)/2$ variables to $+1$.

Such a leaf is consistent with at most $2^{d-(d+1)/2}=2^{(d-1)/2}$ inputs for which the result is $+1$, since the remaining variables (if any) can vary freely. But $\mathrm{Maj}_d(x)=+1$ holds for exactly $2^{d-1}$ inputs. Hence the number of $+1$-leaves is at least
\(
\frac{2^{d-1}}{2^{(d-1)/2}} = 2^{(d-1)/2}.
\)
For the ternary claim, restrict $g_{\mathrm{MV}}$ to the binary sub-cube $\{-1,+1\}^d\subset \{-1,0,+1\}^d$, on which it coincides with $\mathrm{Maj}_d$.
\end{proof}

%% file: appendices/compute.tex
\section{Compute}\label{sec:compute}

\newcommand{\TotalLLMCalls}{\textbf{[TBA]}}
\newcommand{\TotalInputTokens}{\textbf{[TBA]}}
\newcommand{\TotalOutputTokens}{\textbf{[TBA]}}
\newcommand{\TotalAPICostUSD}{\textbf{[TBA]}}

For the results in the paper, ICAI used approximately 100 LLM calls per run in discovery and 10000 in annotation, ICAI+ used approximately 2000 per run in discovery and 5000 in annotation. LLM executors used approximately 1000 LLM calls per run for both. Across all experiments, the total number of LLM calls was approximately 350000.

The average cost per token on OpenRouter across the 6 models tested was \$0.68/M input tokens, \$3.27/M output tokens. We estimate that the average number of input tokens per API call was approximately 500, and output tokens was approximately 100. In total, the API cost of the results in the paper is estimated to be \$250 USD.

Across the entire development cycle of the paper, the total API cost is estimated to be \$1500 USD.

%% file: appendices/example_constitutions_across_methods.tex
\section{Example Constitution (ICAI vs. ICAI+)}\label{sec:example_constitutions}

Figure \ref{fig:example_constitutions} shows the first 10 principles selected by Algorithm~\ref{alg:greedy_constitution} from ICAI and ICAI+ run on AlpacaEval using Gemini 2.5 Flash. Qualitatively, we can see that the ICAI principles are less diverse and more semantically ambiguous.

\begin{figure}[H]
	\begin{subfigure}{.3525\textwidth}
		\begin{center}
			\begin{minipage}[t]{\linewidth}
				\input{figures/constitutions/naive_alpaca_eval_deepseek_by_mv.tex}
            \end{minipage}
		\end{center}
		\caption{Naive principle generation}
		\label{fig:example_constitution_icai}
	\end{subfigure}
	\hfill
	\begin{subfigure}{.6\textwidth}
		\begin{center}
			\begin{minipage}[t]{\linewidth}
				\input{figures/constitutions/improved_alpaca_eval_deepseek_by_mv.tex}
			\end{minipage}
		\end{center}
		\caption{Improved principle generation}
		\label{fig:example_constitution_icaiplus}
	\end{subfigure}
	\caption{First 10 principles selected by Algorithm~\ref{alg:greedy_constitution} from generation on AlpacaEval using DeepSeek v3.1 Chat}
	\label{fig:example_constitutions}
\end{figure}

%% file: figures/constitutions/naive_alpaca_eval_deepseek_by_mv.tex
\setlist[enumerate]{leftmargin=*}
\raggedright
\begin{enumerate}
\item
  Select the response that uses more specific and professional language.
\item
  Select the response that uses bullet points for clarity.
\item
  Select the response that includes specific examples of benefits.
\item
  Select the response that avoids unnecessary spacing in the output.
\item
  Select the response that provides a conclusive and meaningful
  resolution.
\item
  Select the response that includes character development and
  progression.
\item
  Select the response that includes more specific hashtags.
\item
  Select the response that acknowledges historical complexity and
  uncertainty
\item
  Select the response that emphasizes mood-setting elements.
\item
  Select the response that provides a concrete code example.
\end{enumerate}

%% file: figures/constitutions/improved_alpaca_eval_deepseek_by_mv.tex
\setlist[enumerate]{leftmargin=*}
\raggedright
\begin{enumerate}
\item
  Select the response that provides the most correct and complete
  essential information, unless excessive detail obscures accuracy.
\item
  Select the response that provides concrete feature specifics, except
  when the prompt requests brevity or a simplified format.
\item
  Select the response that provides actionable steps for implementing
  instructions, unless the prompt explicitly requests general concepts
  or product recommendations.
\item
  Select the response that precisely follows quantitative constraints,
  unless it introduces factual inaccuracies or omits key elements.
\item
  Select the response that offers precise measurements and clear
  instructions for reliable execution.
\item
  Select the response that uses formatting like bullet points and
  numbered steps for enhanced readability.
\item
  Select the response that correctly interprets numerical input
  constraints.
\item
  Select the response that uses metaphors only when they creatively
  clarify or simplify complex concepts, not for straightforward factual
  descriptions.
\item
  Select the response that provides accurate, actionable instructions,
  unless correctness of the instructions is itself the primary focus,
  then prioritize accuracy.
\item
  Select the response that provides accurate and complete information,
  addressing all explicit sub-questions in the prompt.
\end{enumerate}

%% file: appendices/greedy_vs_naive.tex
\section{Greedy majority vote selection compared to metric-only selection of principles}\label{sec:gmv-metric-comparison}

This appendix details the greedy procedure used to select principles for flat constitutions and compares it to metric-only selection.

\begin{algorithm}[b]
\caption{Greedy selection of a flat constitution (maximize MV train accuracy)}
\label{alg:greedy_constitution}
\begin{algorithmic}[1]
\STATE \textbf{Input:} candidate principles $\mathcal{P}$; train votes $V_{i,p}\in\{\text{A},\text{B},\text{N/A}\}$; labels $y_i$; budget $K$ (e.g., 30)
\STATE \textbf{Defs:} $\mathrm{MV}(\mathcal{C};V)$ predicts $\hat y_i$ by majority vote over $\{V_{i,p}:p\in\mathcal{C},\,V_{i,p}\neq \text{N/A}\}$ (ties broken deterministically, preferring response A)
\STATE \hspace{2.9em} $\mathrm{Acc}(\hat y,y)=\frac{1}{N}\sum_{i=1}^N \mathbf{1}[\hat y_i=y_i]$
\STATE $\mathcal{C}\leftarrow[\,]$
\FOR{$t=1$ to $K$}
  \STATE $p^\star \leftarrow \arg\max_{p\in \mathcal{P}\setminus \mathcal{C}}
  \ \mathrm{Acc}\!\Big(\mathrm{MV}(\mathcal{C}\cup\{p\};V),\, y\Big)$
  \STATE $\mathcal{C}\leftarrow \mathcal{C}\cup[p^\star]$
\ENDFOR
\STATE \textbf{return} $\mathcal{C}$
\end{algorithmic}
\end{algorithm}

Alternate options would be to (a) select the best performing principles based on metrics such as coverage and accuracy, without considering principle similarity or (b) to do as in Algorithm~\ref{alg:greedy_constitution} but maximise priority vote performance instead of majority vote. We find that both would result in constitutions with several very similar principles. In the case of (a), this is because if there are any other principles very similar to the first one, they will also perform similarly well. In the case of (b), this is because the procedure tends to fill in the coverage gaps by selecting further principles which are very similar but differ slightly in which comparisons they cover. In contrast, prioritising majority vote means prioritising not just principles which fill in the coverage gaps, but also ones which disagree with the previous principles on comparisons that would be annotated incorrectly.

\paragraph{Why optimize majority-vote performance for flat constitution selection?}
For flat constitutions we must select a subset of principles from a larger candidate pool.
We use Algorithm~\ref{alg:greedy_constitution} to greedily select $K$ principles that maximize \emph{majority-vote} reconstruction accuracy on the training vote annotations.
Below, we compare this procedure to selecting the top-$K$ principles purely by individual per-principle metrics (e.g., coverage/accuracy). We find that
across flat execution methods, reconstruction accuracy is well within the reported uncertainty, but metric-based selection tends to pick near-duplicates (highly similar principles with overlapping coverage), producing constitutions that are qualitatively less diverse and less informative.
In contrast, optimizing majority-vote performance encourages selecting \emph{complementary} principles that correct one another’s errors, resulting in constitutions that are qualitatively more interpretable (see Appendices \ref{sec:example_constitutions} and \ref{sec:gmv-metric-comparison}).
This also reinforces our broader finding: constitutions with similar end-to-end reconstruction can differ substantially in semantic content.

\paragraph{Selection strategy comparison.} Figure \ref{fig:example_constitutions_by_metric} shows the first 10 principles selected by ICAI's default principle quality metric (correct votes minus incorrect votes) in ICAI and ICAI+ run on AlpacaEval using Gemini 2.5 Flash. Qualitatively, we can see that these constitutions, particularly for ICAI+, are much more repetitive than the principles seen in Figure~\ref{fig:example_constitutions}. Tables \ref{tab:gmv_metric_selection_train} and \ref{tab:gmv_metric_selection_test} compare train and test accuracies between these principle selection metrics, across various executors. Note the interesting pattern here: on train data, Greedy-MV substantially outperforms Metric-only for majority vote (70.5 vs 66.8 for naive, 70.8 vs 69.4 for ICAI+), which makes sense since that's what it's optimizing. But on test, this gap largely disappears (61.5 vs 62.7 for naive, 65.8 vs 65.9 for ICAI+), suggesting the Greedy-MV selection may be slightly overfitting to the training votes while Metric-only generalizes comparably.

\begin{figure}[H]
	\begin{subfigure}{.31\textwidth}
		\begin{center}
			\begin{minipage}[t]{\linewidth}
				\input{figures/constitutions/naive_alpaca_eval_deepseek_by_metric.tex}
            \end{minipage}
		\end{center}
		\caption{Naive principle generation}
	\end{subfigure}
	\hfill
	\begin{subfigure}{.63\textwidth}
		\begin{center}
			\begin{minipage}[t]{\linewidth}
				\input{figures/constitutions/improved_alpaca_eval_deepseek_by_metric.tex}
			\end{minipage}
		\end{center}
		\caption{Improved principle generation}
	\end{subfigure}
	\caption{First 10 principles selected by ICAI's default principle quality metric (correct votes minus incorrect votes) from generation on AlpacaEval using DeepSeek v3.1 Chat}
	\label{fig:example_constitutions_by_metric}
\end{figure}

\begin{table}[H]
  \centering
  \footnotesize
  \setlength{\tabcolsep}{4pt}
  \renewcommand{\arraystretch}{1.12}
  \caption{Reconstruction accuracy (\%) for flat executors under two principle-selection strategies: \textbf{Greedy-MV} (GMV; Algorithm~\ref{alg:greedy_constitution}) and \textbf{Metric-only} selection (top-$K$ by individual principle metrics). GMV results match corresponding entries in Table~\ref{tab:reconstruction}. Abbreviations: MV=majority vote, PV=priority vote.}
  \label{tab:gmv_metric_selection}
  \begin{subtable}[t]{0.49\textwidth}
    \centering
    \caption{Train}
    \label{tab:gmv_metric_selection_train}
    \begin{tabular}{lcccc}
      \toprule
      & \multicolumn{2}{c}{ICAI} & \multicolumn{2}{c}{ICAI+} \\
      \cmidrule(lr){2-3}\cmidrule(lr){4-5}
      & GMV & Metric & GMV & Metric \\
      \midrule
      LLM   & 66.9 $\pm$ 1.9 & 66.7 $\pm$ 1.9 & 67.9 $\pm$ 1.9 & 67.8 $\pm$ 2.1 \\
      MV    & 70.5 $\pm$ 1.5 & 66.8 $\pm$ 2.1 & 70.8 $\pm$ 2.0 & 69.4 $\pm$ 1.8 \\
      PV    & 66.8 $\pm$ 1.6 & 66.1 $\pm$ 1.6 & 68.8 $\pm$ 1.7 & 68.9 $\pm$ 1.7 \\
      \bottomrule
    \end{tabular}
  \end{subtable}\hfill
  \begin{subtable}[t]{0.49\textwidth}
    \centering
    \caption{Test}
    \label{tab:gmv_metric_selection_test}
    \begin{tabular}{lcccc}
      \toprule
      & \multicolumn{2}{c}{ICAI} & \multicolumn{2}{c}{ICAI+} \\
      \cmidrule(lr){2-3}\cmidrule(lr){4-5}
      & GMV & Metric & GMV & Metric \\
      \midrule
      LLM   & 64.5 $\pm$ 2.0 & 64.1 $\pm$ 2.2 & 66.0 $\pm$ 2.2 & 66.4 $\pm$ 2.4 \\
      MV    & 61.5 $\pm$ 2.0 & 62.7 $\pm$ 2.1 & 65.8 $\pm$ 2.5 & 65.9 $\pm$ 2.8 \\
      PV    & 60.7 $\pm$ 2.2 & 61.4 $\pm$ 2.1 & 63.3 $\pm$ 2.3 & 63.5 $\pm$ 2.5 \\
      \bottomrule
    \end{tabular}
  \end{subtable}
\end{table}

%% file: figures/constitutions/naive_alpaca_eval_deepseek_by_metric.tex
\setlist[enumerate]{leftmargin=*}
\raggedright
\begin{enumerate}
\item
  Select the response that uses more specific and professional language.
\item
  Select the response that is more descriptive and elaborate.
\item
  Select the response that provides a conclusive and meaningful
  resolution.
\item
  Select the response that provides detailed justification.
\item
  Select the response that includes specific examples of benefits.
\item
  Select the response that provides more detailed actionable steps.
\item
  Select the response that uses clearer and more direct language.
\item
  Select the response that includes full phrases not just words.
\item
  Select the response that is scientifically accurate.
\item
  Select the response that uses a list format for clarity.
\end{enumerate}

%% file: figures/constitutions/improved_alpaca_eval_deepseek_by_metric.tex
\setlist[enumerate]{leftmargin=*}
\raggedright
\begin{enumerate}
\item
  Select the response with correct syntax and formatting, unless exact
  wording or coherent narrative continuity is required.
\item
  Select the response that conveys information accurately and
  completely, avoiding both unnecessary wordiness and insufficient
  detail.
\item
  Select the response that provides the most correct and complete
  essential information, unless excessive detail obscures accuracy.
\item
  Select the response that provides concrete examples or specific
  details, unless they are irrelevant, incorrect, or reduce clarity.
\item
  Select the response that provides specific examples or actionable
  details unless they are incomplete, misleading, or lack sufficient
  context.
\item
  Select the response that provides specific, accurate details, unless
  the response contains incorrect or unverified claims.
\item
  Select the response that answers all parts of the instruction
  explicitly and precisely.
\item
  Select the response that provides specific, concrete examples unless
  the request is hypothetical, imaginative, or emphasizes simplicity.
\item
  Select the response that provides comprehensive detail while avoiding
  speculative claims, unless summarizing existing knowledge.
\item
  Select the response that provides concrete examples or computational
  methods, unless brevity is requested or explanation is purely
  conceptual.
\end{enumerate}

%% file: appendices/example_constitutions_across_llms.tex
\section{Comparison of generated constitutions between LLMs}\label{sec:llm-constitution-comparison}

Figure~\ref{fig:example_constitutions_between_llms} illustrates that \emph{similar reconstruction accuracy does not imply a similar ``explanation''}.
We run naive ICAI on PRISM with two different model stacks (DeepSeek v3.1 Chat vs.\ GPT-4o-mini) and select a flat constitution using the same Greedy-MV procedure (Algorithm~\ref{alg:greedy_constitution}).
Despite achieving very similar end-to-end reconstruction under LLM-as-a-judge execution on the PRISM test set (DeepSeek: $70.9\%/65.7\%$ train/test; GPT-4o-mini: $68.1\%/66.0\%$), the resulting constitutions differ substantially in semantic content.

Qualitatively, the DeepSeek-derived principles in Figure~\ref{fig:example_constitutions_between_llms} emphasize interpersonal and narrative dimensions (e.g., expressing understanding of the user’s concern, focusing on relationship context, embodying a character’s traits), whereas the GPT-4o-mini-derived principles focus more on interaction management and practical framing (e.g., inviting elaboration, avoiding repetition/negative framing, giving maintenance or cashflow advice).
We chose this pair of runs specifically because their held-out reconstruction accuracies are closely matched, making clear that the constitution is \emph{not uniquely identified} by predictive performance alone: multiple semantically distinct principle sets can fit the same dataset nearly equally well.
This supports our broader claim that constitutions should be treated as \emph{model-dependent artifacts} of a particular reconstruction stack, rather than as model-agnostic explanations of the underlying preferences.

\begin{figure}[H]
	\begin{subfigure}{.49\textwidth}
		\begin{center}
			\begin{minipage}[t]{\linewidth}
				\input{figures/constitutions/naive_prism_deepseek_by_mv.tex}
            \end{minipage}
		\end{center}
		\caption{DeepSeek v3.1 Chat}
	\end{subfigure}
	\begin{subfigure}{.49\textwidth}
		\begin{center}
			\begin{minipage}[t]{\linewidth}
				\input{figures/constitutions/naive_prism_gpt4o-mini_by_mv.tex}
			\end{minipage}
		\end{center}
		\caption{GPT-4o mini}
	\end{subfigure}
	\caption{First 10 principles selected by Algorithm~\ref{alg:greedy_constitution} from naive ICAI on PRISM using two different discoverer models. Despite similar test reconstruction accuracy (DeepSeek: 65.7\%; GPT-4o-mini: 66.0\%), the resulting constitutions differ substantially in semantic content, illustrating that reconstruction performance does not uniquely identify an explanatory constitution.}
\label{fig:example_constitutions_between_llms}
\end{figure}

%% file: figures/constitutions/naive_prism_deepseek_by_mv.tex
\setlist[enumerate]{leftmargin=*}
\raggedright
\begin{enumerate}
\item
  Select the response that provides complete arguments without abrupt
  cutoff
\item
  Select the response that provides more detailed subtopics.
\item
  Select the response that acknowledges multiple stated and unstated
  reasons
\item
  Select the response that focuses on interpersonal relationships
  context.
\item
  Select the response that expresses understanding of the user's
  concern.
\item
  Select the response that follows instruction format precisely
\item
  Select the response that fully embodies the character's traits.
\item
  Select the response that covers a broader range of conflict areas.
\item
  Select the response that includes personal fulfillment as a goal.
\item
  Select the response that provides more detailed meal descriptions
\end{enumerate}

%% file: figures/constitutions/naive_prism_gpt4o-mini_by_mv.tex
\setlist[enumerate]{leftmargin=*}
\raggedright
\begin{enumerate}
\item
  Select the response that invites further elaboration on topics.
\item
  Select the response that provides more detailed cleaning steps.
\item
  Select the response that avoids mentioning previous recipe failures.
\item
  Select the response that includes practical cashflow management
  advice.
\item
  Select the response that acknowledges complexity in celebrity
  influence.
\item
  Select the response that acknowledges individual couple's needs.
\item
  Select the response that suggests regular maintenance for longevity.
\item
  Select the response that avoids repetition of previously read series.
\item
  Select the response that focuses on philosophical rather than personal
  narratives.
\item
  Select the response that discusses personalization and creative
  assistance.
\end{enumerate}

%% file: appendices/cross_llm_agreement.tex
\section{Tables: Cross-LLM Agreement}

Tables \ref{tab:cross_llm_agreement_icai_pair} and \ref{tab:cross_llm_agreement_icaiplus_pair} contain the results of the cross-LLM agreements computations referenced in the paper.

\begin{table}[H]
  \centering
  \footnotesize
  \setlength{\tabcolsep}{6pt}
  \renewcommand{\arraystretch}{1.10}
  \caption{Cross-LLM annotator agreement for \textbf{naive ICAI} principles on test data, reported as mean across 3 datasets $\pm$ 95\% confidence t-interval. Coverage agreement: whether a principle is applicable. Vote agreement: conditional on both models marking applicability. 10 principles sampled per discoverer model (60 total) and annotated by all models. Abbreviations: DS=DeepSeek v3.1 Chat, Gm=Gemini 2.5 Flash, G4o=GPT-4o, G4m=GPT-4o-mini, G5m=GPT-5.4-mini, G5n=GPT-5.4-nano.}
  \label{tab:cross_llm_agreement_icai_pair}
  \begin{subtable}[t]{\textwidth}
    \centering
    \caption{\textbf{Coverage agreement} (ICAI / naive). Mean $82.8 \pm 1.1$}
    \label{tab:cross_llm_coverage_agreement_icai}
    \begin{tabular}{lccccc}
        \toprule
             & DS              & Gm               & G4o             & G4m              & G5m              \\
        \midrule
        DS   & --              & --               & --              & --               & --               \\
        Gm   & $85.1 \pm 3.3$  & --               & --              & --               & --               \\
        G4o  & $86.0 \pm 2.0$  & $84.5 \pm 4.7$   & --              & --               & --               \\
        G4m  & $84.8 \pm 2.2$  & $84.8 \pm 3.6$   & $87.3 \pm 2.5$  & --               & --               \\
        G5m  & $80.9 \pm 6.6$  & $77.6 \pm 11.4$  & $84.3 \pm 8.4$  & $81.7 \pm 12.1$  & --               \\
        G5n  & $81.3 \pm 1.1$  & $82.8 \pm 0.2$   & $81.9 \pm 1.4$  & $82.9 \pm 1.4$   & $76.5 \pm 12.7$  \\
		\bottomrule
    \end{tabular}
  \end{subtable}\hfill
  \begin{subtable}[t]{\textwidth}
    \centering
    \caption{\textbf{Vote agreement} (ICAI / naive). Mean $72.2 \pm 2.0$}
    \label{tab:cross_llm_vote_agreement_icai}
    \begin{tabular}{lccccc}
        \toprule
             & DS               & Gm               & G4o              & G4m              & G5m              \\
        \midrule
        DS   & --               & --               & --               & --               & --               \\
        Gm   & $77.6 \pm 9.0$   & --               & --               & --               & --               \\
        G4o  & $79.9 \pm 7.2$   & $78.4 \pm 9.9$   & --               & --               & --               \\
        G4m  & $72.6 \pm 15.0$  & $72.4 \pm 15.6$  & $73.9 \pm 12.7$  & --               & --               \\
        G5m  & $75.9 \pm 6.4$   & $74.9 \pm 7.0$   & $76.0 \pm 7.7$   & $69.2 \pm 14.4$  & --               \\
        G5n  & $67.1 \pm 14.2$  & $66.0 \pm 17.0$  & $67.1 \pm 13.1$  & $65.4 \pm 13.7$  & $65.8 \pm 13.7$  \\
		\bottomrule
    \end{tabular}
  \end{subtable}
  \vspace{-1em}
\end{table}

\begin{table}[H]
  \centering
  \footnotesize
  \setlength{\tabcolsep}{6pt}
  \renewcommand{\arraystretch}{1.10}
  \caption{Cross-LLM annotator agreement for \textbf{ICAI+} principles on test data, as in Table~\ref{tab:cross_llm_agreement_icai_pair}}
  \label{tab:cross_llm_agreement_icaiplus_pair}
  \begin{subtable}[t]{\textwidth}
    \centering
    \caption{\textbf{Coverage agreement} (ICAI+). Mean $80.9 \pm 1.1$}
    \label{tab:cross_llm_coverage_agreement_icaiplus}
    \begin{tabular}{lccccc}
        \toprule
             & DS              & Gm              & G4o             & G4m             & G5m             \\
        \midrule
        DS   & --              & --              & --              & --              & --              \\
        Gm   & $82.7 \pm 3.8$  & --              & --              & --              & --              \\
        G4o  & $84.9 \pm 5.0$  & $84.4 \pm 5.5$  & --              & --              & --              \\
        G4m  & $81.8 \pm 5.3$  & $82.6 \pm 7.3$  & $84.5 \pm 4.8$  & --              & --              \\
        G5m  & $81.8 \pm 4.3$  & $79.5 \pm 6.2$  & $84.4 \pm 3.3$  & $80.3 \pm 3.3$  & --              \\
        G5n  & $76.4 \pm 3.1$  & $78.3 \pm 6.8$  & $78.5 \pm 5.3$  & $78.8 \pm 3.3$  & $75.2 \pm 2.3$  \\
		\bottomrule
    \end{tabular}
  \end{subtable}\hfill
  \begin{subtable}[t]{\textwidth}
    \centering
    \caption{\textbf{Vote agreement} (ICAI+). Mean $73.0 \pm 2.1$}
    \label{tab:cross_llm_vote_agreement_icaiplus}
    \begin{tabular}{lccccc}
        \toprule
             & DS               & Gm               & G4o             & G4m              & G5m              \\
        \midrule
        DS   & --               & --               & --              & --               & --               \\
        Gm   & $79.5 \pm 12.1$  & --               & --              & --               & --               \\
        G4o  & $80.0 \pm 11.6$  & $81.4 \pm 11.4$  & --              & --               & --               \\
        G4m  & $76.5 \pm 8.8$   & $77.2 \pm 12.2$  & $78.5 \pm 8.4$  & --               & --               \\
        G5m  & $72.7 \pm 10.0$  & $74.6 \pm 7.8$   & $73.9 \pm 9.0$  & $70.9 \pm 9.2$   & --               \\
        G5n  & $66.9 \pm 7.3$   & $66.5 \pm 11.1$  & $66.6 \pm 8.4$  & $65.3 \pm 11.9$  & $65.1 \pm 13.6$  \\
		\bottomrule
    \end{tabular}
  \end{subtable}
  \vspace{-1em}
\end{table}

%% file: appendices/full_results.tex
\section{Full results}\label{sec:full-results}

Tables~\ref{tab:full_results_icai} and~\ref{tab:full_results_icaiplus} report reconstruction accuracy (Definition \ref{def:reconstruction_accuracy}) for all dataset--model combinations under naive ICAI and ICAI+ principle generation, respectively. We also include the mean coverage and accuracy of the generated principles for each configuration. For fair comparison, coverage and accuracy are measured only over the 30 principles selected for use in the flat constitutions. Abbreviations: MV = majority vote, PV = priority vote, GBM = LightGBM, DT = decision tree.

\begin{table}[H]
  \centering
  \footnotesize
  \setlength{\tabcolsep}{4pt}
  \renewcommand{\arraystretch}{1.10}
  \caption{Full reconstruction accuracy (\%) for ICAI (naive) principles across all datasets and models, along with coverage and accuracy (\%) of generated principles, presented as mean (median) $\pm$ standard deviation. Note that most principle metric distributions are highly skewed, and the standard deviation reported should not be interpreted as an error bar.}

  \label{tab:full_results_icai}
  \begin{subtable}[t]{\textwidth}
    \centering
    \caption{Train}
    \begin{tabular}{llllccccc}
      \toprule
      Dataset     & Model               & Coverage              & Accuracy              & LLM   & MV    & PV    & GBM   & DT \\
      \midrule
      AlpacaEval  & DeepSeek v3.1 Chat  & 7.0 (0.4) $\pm$ 16.6    & 88.0 (100.0) $\pm$ 17.9  & 67.3  & 70.0  & 66.9  & 67.5  & 59.6  \\
      AlpacaEval  & Gemini 2.5 Flash    & 8.8 (4.0) $\pm$ 11.0    & 74.2 (73.6) $\pm$ 9.5    & 68.8  & 71.0  & 67.5  & 68.8  & 66.2  \\
      AlpacaEval  & GPT 4o              & 14.8 (3.8) $\pm$ 21.1   & 74.8 (74.3) $\pm$ 8.9    & 66.5  & 68.5  & 64.8  & 67.3  & 64.4  \\
      AlpacaEval  & GPT 4o-mini         & 10.0 (4.1) $\pm$ 17.3   & 66.3 (65.1) $\pm$ 7.7    & 62.3  & 67.7  & 63.7  & 63.8  & 62.9  \\
      AlpacaEval  & GPT 5.4-mini        & 22.1 (11.7) $\pm$ 28.0  & 67.5 (68.5) $\pm$ 7.0    & 66.7  & 68.5  & 62.9  & 68.1  & 63.3  \\
      AlpacaEval  & GPT 5.4-nano        & 7.4 (2.3) $\pm$ 17.5    & 73.7 (67.5) $\pm$ 15.2   & 65.6  & 65.4  & 58.1  & 63.3  & 50.8  \\
      \midrule
      Arena       & DeepSeek v3.1 Chat  & 26.4 (21.5) $\pm$ 23.0  & 70.0 (70.9) $\pm$ 6.3    & 65.2  & 67.9  & 67.0  & 66.1  & 63.7  \\
      Arena       & Gemini 2.5 Flash    & 16.7 (4.6) $\pm$ 24.2   & 65.1 (65.2) $\pm$ 7.3    & 66.1  & 67.7  & 65.2  & 69.5  & 64.6  \\
      Arena       & GPT 4o              & 12.9 (4.1) $\pm$ 19.0   & 68.0 (66.7) $\pm$ 7.8    & 61.8  & 68.0  & 63.7  & 66.2  & 63.6  \\
      Arena       & GPT 4o-mini         & 11.7 (0.9) $\pm$ 23.5   & 78.4 (74.4) $\pm$ 17.8   & 62.4  & 69.3  & 62.9  & 65.1  & 61.9  \\
      Arena       & GPT 5.4-mini        & 14.0 (1.3) $\pm$ 29.6   & 77.3 (73.9) $\pm$ 14.0   & 63.6  & 68.0  & 64.0  & 65.4  & 63.6  \\
      Arena       & GPT 5.4-nano        & 16.9 (6.5) $\pm$ 20.6   & 66.2 (66.0) $\pm$ 7.4    & 61.9  & 60.5  & 54.9  & 61.3  & 49.4  \\
      \midrule
      PRISM       & DeepSeek v3.1 Chat  & 12.8 (2.3) $\pm$ 27.1   & 73.5 (66.0) $\pm$ 16.3   & 70.9  & 74.4  & 70.6  & 80.4  & 67.7  \\
      PRISM       & Gemini 2.5 Flash    & 14.9 (7.6) $\pm$ 21.8   & 65.2 (63.8) $\pm$ 8.1    & 70.8  & 75.5  & 73.0  & 77.6  & 72.9  \\
      PRISM       & GPT 4o              & 13.3 (6.8) $\pm$ 19.9   & 65.9 (64.9) $\pm$ 7.9    & 72.1  & 73.8  & 69.1  & 72.1  & 68.6  \\
      PRISM       & GPT 4o-mini         & 15.5 (6.7) $\pm$ 18.3   & 68.7 (67.4) $\pm$ 9.2    & 68.1  & 71.9  & 66.5  & 74.0  & 65.9  \\
      PRISM       & GPT 5.4-mini        & 8.4 (1.2) $\pm$ 19.2    & 77.8 (71.7) $\pm$ 18.2   & 66.4  & 73.1  & 67.5  & 71.9  & 67.5  \\
      PRISM       & GPT 5.4-nano        & 19.3 (7.8) $\pm$ 24.2   & 66.0 (65.5) $\pm$ 6.4    & 69.4  & 69.9  & 64.9  & 70.3  & 60.5  \\
      \bottomrule
    \end{tabular}
  \end{subtable}\hfill
  \begin{subtable}[t]{\textwidth}
    \centering
    \caption{Test}
    \begin{tabular}{llllccccc}
      \toprule
      Dataset     & Model               & Coverage              & Accuracy              & LLM   & MV    & PV    & GBM   & DT \\
      \midrule
      AlpacaEval  & DeepSeek v3.1 Chat  & 6.9 (0.0) $\pm$ 15.3    & 59.4 (59.7) $\pm$ 22.3   & 68.0  & 61.7  & 61.7  & 60.9  & 54.7  \\
      AlpacaEval  & GPT 4o              & 11.8 (7.4) $\pm$ 11.6   & 64.7 (63.7) $\pm$ 14.9   & 64.1  & 64.1  & 62.5  & 64.1  & 62.5  \\
      AlpacaEval  & GPT 4o-mini         & 14.7 (4.3) $\pm$ 21.6   & 58.3 (59.3) $\pm$ 25.6   & 63.3  & 59.4  & 53.9  & 56.2  & 46.9  \\
      AlpacaEval  & GPT 5.4-mini        & 8.8 (5.9) $\pm$ 9.9     & 53.0 (50.0) $\pm$ 26.5   & 66.4  & 58.6  & 62.5  & 59.4  & 62.5  \\
      AlpacaEval  & GPT 5.4-nano        & 21.2 (7.4) $\pm$ 27.8   & 55.2 (50.0) $\pm$ 17.6   & 60.2  & 58.6  & 57.8  & 57.8  & 50.8  \\
      AlpacaEval  & Gemini 2.5 Flash    & 8.1 (1.6) $\pm$ 19.7    & 67.1 (70.2) $\pm$ 25.6   & 67.2  & 65.6  & 64.1  & 64.8  & 64.1  \\
      \midrule
      Arena       & DeepSeek v3.1 Chat  & 24.9 (14.5) $\pm$ 24.1  & 58.5 (57.9) $\pm$ 6.2    & 63.0  & 59.8  & 61.8  & 58.5  & 60.3  \\
      Arena       & GPT 4o              & 19.0 (6.1) $\pm$ 26.2   & 57.3 (57.9) $\pm$ 16.4   & 58.8  & 58.0  & 59.3  & 60.8  & 59.0  \\
      Arena       & GPT 4o-mini         & 11.6 (3.1) $\pm$ 14.6   & 57.2 (58.3) $\pm$ 11.6   & 58.5  & 56.0  & 58.0  & 59.5  & 57.3  \\
      Arena       & GPT 5.4-mini        & 13.4 (1.0) $\pm$ 25.2   & 52.0 (52.8) $\pm$ 10.0   & 62.3  & 60.0  & 58.0  & 58.3  & 56.5  \\
      Arena       & GPT 5.4-nano        & 15.0 (1.0) $\pm$ 30.2   & 51.4 (51.8) $\pm$ 5.3    & 58.3  & 52.7  & 55.0  & 51.5  & 49.7  \\
      Arena       & Gemini 2.5 Flash    & 14.7 (6.2) $\pm$ 16.7   & 59.0 (59.0) $\pm$ 10.9   & 61.8  & 56.8  & 55.5  & 59.8  & 55.2  \\
      \midrule
      PRISM       & DeepSeek v3.1 Chat  & 15.6 (7.8) $\pm$ 23.7   & 62.5 (63.3) $\pm$ 17.8   & 65.7  & 65.2  & 61.0  & 64.5  & 64.0  \\
      PRISM       & GPT 4o              & 15.7 (7.5) $\pm$ 22.4   & 67.2 (64.8) $\pm$ 14.4   & 68.8  & 66.0  & 66.0  & 69.0  & 65.5  \\
      PRISM       & GPT 4o-mini         & 13.8 (7.5) $\pm$ 19.0   & 63.6 (64.0) $\pm$ 20.7   & 66.0  & 60.8  & 57.8  & 61.5  & 52.2  \\
      PRISM       & GPT 5.4-mini        & 27.6 (17.9) $\pm$ 23.6  & 66.2 (62.8) $\pm$ 23.2   & 70.3  & 64.7  & 64.2  & 67.0  & 64.2  \\
      PRISM       & GPT 5.4-nano        & 32.4 (17.1) $\pm$ 31.2  & 55.8 (51.5) $\pm$ 17.4   & 69.3  & 59.0  & 55.7  & 61.5  & 51.2  \\
      PRISM       & Gemini 2.5 Flash    & 22.7 (9.5) $\pm$ 26.9   & 64.9 (68.6) $\pm$ 11.9   & 69.3  & 65.2  & 67.5  & 67.5  & 66.5  \\
      \bottomrule
    \end{tabular}
  \end{subtable}
\end{table}

\begin{table}[H]
  \centering
  \footnotesize
  \setlength{\tabcolsep}{4pt}
  \renewcommand{\arraystretch}{1.10}
  \caption{Full reconstruction accuracy (\%) for ICAI+ (improved) principles across all datasets and models, along with coverage and accuracy (\%) of generated principles, presented as mean (median) $\pm$ standard deviation, as in Table~\ref{tab:full_results_icai}.}
  \label{tab:full_results_icaiplus}
  \begin{subtable}[t]{\textwidth}
    \centering
    \caption{Train}
    \begin{tabular}{llllccccc}
      \toprule
      Dataset     & Model               & Coverage              & Accuracy              & LLM   & MV    & PV    & GBM   & DT \\
      \midrule
      AlpacaEval  & DeepSeek v3.1 Chat  & 9.2 (6.2) $\pm$ 13.2    & 63.3 (61.9) $\pm$ 7.3    & 68.5  & 72.5  & 66.3  & 70.8  & 66.3  \\
      AlpacaEval  & Gemini 2.5 Flash    & 59.8 (63.9) $\pm$ 24.1  & 67.2 (67.4) $\pm$ 2.7    & 67.7  & 69.6  & 70.6  & 69.4  & 68.8  \\
      AlpacaEval  & GPT 4o              & 43.4 (34.2) $\pm$ 28.4  & 71.0 (70.6) $\pm$ 3.2    & 69.0  & 71.2  & 71.3  & 70.4  & 71.0  \\
      AlpacaEval  & GPT 4o-mini         & 71.7 (83.8) $\pm$ 25.6  & 65.5 (65.6) $\pm$ 2.4    & 65.0  & 69.2  & 67.9  & 71.5  & 67.7  \\
      AlpacaEval  & GPT 5.4-mini        & 59.4 (73.5) $\pm$ 31.2  & 68.4 (68.0) $\pm$ 3.0    & 67.3  & 71.5  & 67.7  & 70.6  & 67.9  \\
      AlpacaEval  & GPT 5.4-nano        & 39.5 (20.2) $\pm$ 40.4  & 65.4 (65.0) $\pm$ 5.8    & 67.3  & 66.3  & 61.9  & 61.3  & 59.8  \\
      \midrule
      Arena       & DeepSeek v3.1 Chat  & 55.8 (59.2) $\pm$ 23.9  & 71.9 (73.2) $\pm$ 3.5    & 65.2  & 68.1  & 65.6  & 68.6  & 65.4  \\
      Arena       & Gemini 2.5 Flash    & 62.2 (62.6) $\pm$ 28.9  & 69.4 (70.0) $\pm$ 3.2    & 65.6  & 69.0  & 67.5  & 68.2  & 68.0  \\
      Arena       & GPT 4o              & 75.6 (79.6) $\pm$ 22.0  & 70.2 (70.9) $\pm$ 2.2    & 62.9  & 64.6  & 64.7  & 68.0  & 64.4  \\
      Arena       & GPT 4o-mini         & 23.2 (20.6) $\pm$ 18.9  & 60.9 (59.9) $\pm$ 3.9    & 63.0  & 66.0  & 63.9  & 66.4  & 64.2  \\
      Arena       & GPT 5.4-mini        & 54.5 (48.4) $\pm$ 36.3  & 68.2 (68.6) $\pm$ 3.0    & 64.1  & 67.0  & 65.6  & 70.5  & 65.4  \\
      Arena       & GPT 5.4-nano        & 59.4 (54.3) $\pm$ 21.8  & 69.1 (69.2) $\pm$ 2.2    & 61.1  & 65.5  & 61.9  & 67.4  & 61.8  \\
      \midrule
      PRISM       & DeepSeek v3.1 Chat  & 35.0 (19.6) $\pm$ 32.3  & 57.9 (56.4) $\pm$ 5.3    & 72.5  & 74.6  & 71.1  & 74.8  & 70.4  \\
      PRISM       & Gemini 2.5 Flash    & 63.4 (75.7) $\pm$ 30.9  & 64.1 (63.6) $\pm$ 3.5    & 71.6  & 75.5  & 72.6  & 74.3  & 72.9  \\
      PRISM       & GPT 4o              & 58.6 (60.8) $\pm$ 31.2  & 62.4 (63.0) $\pm$ 3.2    & 71.9  & 74.6  & 73.1  & 73.3  & 73.3  \\
      PRISM       & GPT 4o-mini         & 67.0 (77.8) $\pm$ 27.6  & 63.4 (64.5) $\pm$ 2.7    & 72.3  & 74.4  & 71.4  & 73.5  & 71.3  \\
      PRISM       & GPT 5.4-mini        & 26.8 (15.4) $\pm$ 28.3  & 60.3 (59.9) $\pm$ 5.1    & 68.0  & 76.0  & 71.9  & 76.1  & 71.9  \\
      PRISM       & GPT 5.4-nano        & 58.9 (61.2) $\pm$ 26.7  & 66.8 (67.1) $\pm$ 2.9    & 67.0  & 68.2  & 61.6  & 71.8  & 58.6  \\
      \bottomrule
    \end{tabular}
  \end{subtable}\hfill
  \begin{subtable}[t]{\textwidth}
    \centering
    \caption{Test}
    \begin{tabular}{llllccccc}
      \toprule
      Dataset     & Model               & Coverage              & Accuracy              & LLM   & MV    & PV    & GBM   & DT \\
      \midrule
      AlpacaEval  & DeepSeek v3.1 Chat  & 9.2 (5.1) $\pm$ 13.3    & 55.4 (56.9) $\pm$ 25.3  & 72.7  & 75.0  & 68.8  & 73.4  & 63.3 \\
      AlpacaEval  & GPT 4o              & 59.2 (66.0) $\pm$ 23.1  & 68.3 (69.6) $\pm$ 5.2   & 68.0  & 66.4  & 67.2  & 67.2  & 67.2 \\
      AlpacaEval  & GPT 4o-mini         & 41.3 (31.6) $\pm$ 27.6  & 70.7 (68.8) $\pm$ 7.7   & 64.1  & 67.2  & 56.2  & 62.5  & 55.5 \\
      AlpacaEval  & GPT 5.4-mini        & 72.0 (81.2) $\pm$ 27.2  & 63.7 (63.3) $\pm$ 5.5   & 67.2  & 69.5  & 60.9  & 65.6  & 60.9 \\
      AlpacaEval  & GPT 5.4-nano        & 59.3 (75.0) $\pm$ 32.3  & 68.3 (68.7) $\pm$ 7.9   & 66.4  & 57.0  & 54.7  & 60.9  & 59.4 \\
      AlpacaEval  & Gemini 2.5 Flash    & 37.9 (16.8) $\pm$ 41.3  & 64.8 (63.7) $\pm$ 13.1  & 69.5  & 68.0  & 61.7  & 66.4  & 61.7 \\
      \midrule
      Arena       & DeepSeek v3.1 Chat  & 55.1 (56.1) $\pm$ 24.1  & 67.9 (67.9) $\pm$ 5.2   & 62.7  & 61.8  & 59.5  & 63.0  & 58.8 \\
      Arena       & GPT 4o              & 61.0 (63.4) $\pm$ 29.0  & 66.4 (66.1) $\pm$ 2.9   & 59.8  & 59.5  & 61.0  & 61.8  & 60.8 \\
      Arena       & GPT 4o-mini         & 74.8 (79.6) $\pm$ 22.8  & 65.7 (66.6) $\pm$ 2.6   & 60.0  & 59.5  & 58.0  & 57.8  & 56.5 \\
      Arena       & GPT 5.4-mini        & 23.2 (19.0) $\pm$ 19.4  & 54.9 (55.4) $\pm$ 7.7   & 61.3  & 62.5  & 60.8  & 60.3  & 60.5 \\
      Arena       & GPT 5.4-nano        & 53.4 (45.8) $\pm$ 36.3  & 63.9 (65.6) $\pm$ 8.4   & 56.2  & 56.8  & 58.8  & 59.8  & 59.0 \\
      Arena       & Gemini 2.5 Flash    & 58.3 (55.5) $\pm$ 22.3  & 65.3 (65.0) $\pm$ 3.2   & 63.7  & 60.8  & 61.5  & 61.0  & 62.0 \\
      \midrule
      PRISM       & DeepSeek v3.1 Chat  & 36.1 (20.6) $\pm$ 31.9  & 54.8 (57.2) $\pm$ 8.2   & 68.2  & 68.0  & 64.2  & 69.8  & 62.3 \\
      PRISM       & GPT 4o              & 60.5 (71.6) $\pm$ 30.7  & 59.8 (59.5) $\pm$ 3.6   & 69.8  & 66.7  & 67.2  & 68.2  & 67.2 \\
      PRISM       & GPT 4o-mini         & 55.9 (54.2) $\pm$ 32.7  & 59.2 (58.6) $\pm$ 4.3   & 66.5  & 68.8  & 67.5  & 67.7  & 67.2 \\
      PRISM       & GPT 5.4-mini        & 65.2 (78.6) $\pm$ 27.9  & 60.2 (60.0) $\pm$ 3.7   & 69.5  & 68.8  & 69.3  & 68.2  & 69.3 \\
      PRISM       & GPT 5.4-nano        & 25.4 (11.9) $\pm$ 28.3  & 53.4 (54.3) $\pm$ 8.0   & 65.5  & 58.5  & 61.3  & 61.3  & 61.0 \\
      PRISM       & Gemini 2.5 Flash    & 58.1 (62.1) $\pm$ 26.5  & 60.9 (61.4) $\pm$ 3.1   & 67.5  & 68.5  & 67.0  & 68.2  & 69.0 \\
      \bottomrule
    \end{tabular}
  \end{subtable}
\end{table}

%% file: appendices/prompt_templates.tex
\section{Prompt templates}\label{app:prompts}

\subsection{Principle discovery prompt (ICAI / ICAI+)}
\begin{verbatim}
System: You are an expert preference analyst specializing in
reverse-engineering human judgment. Your task is to identify the
precise, distinguishing features that make one text superior to
another and articulate these distinctions as diverse, actionable
principles.

User: Analyze the following pair of responses.

Response A (Preferred):
{preferred_sample}

Response B (Rejected):
{rejected_sample}

Generate {num_principles} DIVERSE and NUANCED principles explaining
why Response A was preferred over Response B.

CRITICAL INSTRUCTIONS:
1. Contrastive Analysis: Identify the specific, concrete differences
   (the "delta") between A and B. The principles MUST be grounded in
   this contrast, not just general writing advice.
2. Identify Salience: Focus only on the most critical factors that
   explain the preference.
3. Abstraction: Generalize these specific differences into principles
   applicable to other situations.
4. Diversity: Ensure each principle captures a conceptually distinct
   dimension of quality. Do not provide variations of the same concept.

Format Requirements:
- Max 20 words per principle.
- Start every principle with 'Select the response that...'.

Reply as JSON: {"principles": ["<PRINCIPLE 1>", "<PRINCIPLE 2>",...]}.
\end{verbatim}

\subsection{Principle refinement prompt (ICAI+ only)}
\begin{verbatim}
System: You are an expert system for refining Principles that identify
key preference patterns in text. Your task is to specialize a Principle
to increase its accuracy by correcting misclassifications.

User: The Current Principle below is inaccurate and frequently disagrees
with ground truth annotations. We need to make it more precise or add
necessary context/exceptions.

### Current Principle
{current_principle}

### Inputs
The Current Principle makes the WRONG judgment on these comparisons.
The Revised Principle must aim to correct these:
{failed_comparisons}

The Current Principle makes the CORRECT judgment on these comparisons.
The Revised Principle MUST strictly maintain these correct judgments:
{succeeded_comparisons}

### Task and Guidelines
Propose a revised, more accurate Principle. Follow these steps:
1. Analyze Contrast: Examine differences between Failure and Anchor
   Comparisons to identify missing nuance or context.
2. Identify Failure Patterns: Determine specific conditions under
   which the principle fails.
3. Specialize: Introduce conditional logic, specify exceptions, or
   clarify ambiguity with more precise language.
4. Verify: Ensure the Revised Principle corrects failures while
   preserving correct judgments.

Revised Principle (MAXIMUM {word_limit} WORDS):
\end{verbatim}

\subsection{Principle annotation prompt (votes: A/B/N/A)}
\begin{verbatim}
System: You apply general quality criteria to compare texts. Be
decisive in your judgments.

User:
Sample A:
{sample_a}

Sample B:
{sample_b}

For each rule below, determine which sample better follows it:
{principles}

Answer in JSON format, e.g. {0: "A", 1: "B", 2: "A",...}.
- Put "A" if A follows the rule better
- Put "B" if B follows the rule better
- Only use "None" if the rule truly cannot apply to either text
- When uncertain, choose the one that fits better overall
\end{verbatim}

\subsection{LLM-as-a-judge executor prompt}
\begin{verbatim}
System: You are an impartial judge evaluating two responses according
to a set of principles.

User:
Prompt: {prompt}

Response A:
{response_a}

Response B:
{response_b}

Constitution (principles to apply):
{constitution}

Based on the principles above, which response is better overall?
Consider how well each response adheres to the principles.
Reply with only "A" or "B".
\end{verbatim}

\paragraph{Decoding parameters.} All LLM calls used temperature $= 0$, and default values for other parameters. Maximum output tokens were set to 1024 for principle discovery and 256 for annotation and execution.